\useunder{\uline}{\ul}{}
\renewcommand{\v}[1]{\boldsymbol{#1}}
\DeclareMathOperator*{\minimize}{min\,}
\DeclareMathOperator*{\maximize}{max\,}
\DeclareMathOperator*{\argmin}{argmin\,}
\DeclareMathOperator*{\subjectto}{subject\,\, to\,}
\DeclareMathOperator*{\diag}{Diag}
\newtheorem{theorem}{Theorem}
\newtheorem{corollary}{Corollary}
\newtheorem{lemma}{Lemma}
\newtheorem{remark}{Remark}
\newtheorem{assumption}{Assumption}
\newcommand{\R}{\mathbb{R}}
\newcommand{\reals}{\R}
\newcommand{\lt}{\left}
\newcommand{\rt}{\right}
\newcommand{\wt}{\widetilde}
\newcommand{\wh}{\widehat}
\newcommand{\wb}{\overline}
\newcommand{\bbeta}{\boldsymbol{\beta}}
\newcommand{\hbeta}{\widehat \bbeta}
\definecolor{asparagus}{rgb}{0.53, 0.66, 0.42}
\definecolor{atomictangerine}{rgb}{1.0, 0.6, 0.4}
\newcommand{\cC}{\mathcal{C}}
\newcommand{\cD}{\mathcal{D}}
\newcommand{\cS}{\mathcal{S}}
\def\boxit#1{\vbox{\hrule\hbox{\vrule\kern6pt
          \vbox{\kern6pt#1\kern6pt}\kern6pt\vrule}\hrule}}
\def\boxit#1{\vbox{\hrule\hbox{\vrule\kern6pt
          \vbox{\kern6pt#1\kern6pt}\kern6pt\vrule}\hrule}}
\title{A Scalable Gradient-Based Optimization Framework for Sparse Minimum-Variance Portfolio Selection}
\author{%
  Sarat Moka\thanks{Corresponding author: \texttt{s.moka@unsw.edu.au}}\\
  School of Mathematics and Statistics\\
  University of New South Wales\\
  Sydney, NSW, Australia \\
\And
  Matias Quiroz\\
  School of Mathematical and Physical Sciences  \\
  University of Technology Sydney \\
  Sydney, NSW, Australia \\
\And
  Vali Asimit \\
  Bayes Business School \\ 
  City St George’s, University of London \\
  London, United Kingdom\\
\And
  Samuel Muller \\
  {Faculty of Science and Engineering} \\ 
  Macquarie University \\
  Sydney, NSW, Australia \\
}
\begin{document}

\maketitle

\begin{abstract}
Portfolio optimization involves selecting asset weights to minimize a risk-reward objective, 
such as the portfolio variance in the classical minimum-variance framework. 
Sparse portfolio selection extends this by imposing a cardinality constraint: only $k$ assets from a universe of $p$ may be included. 
The standard approach models this problem as a mixed-integer quadratic program and relies on commercial solvers to find the optimal solution. 
However, the computational costs of such methods increase exponentially with $k$ and $p$, making them too slow for problems of even moderate size. 
We propose a fast and scalable gradient-based approach that transforms the combinatorial sparse selection problem into a constrained continuous optimization task via Boolean relaxation, 
while preserving equivalence with the original problem on the set of binary points. 
Our algorithm employs a tunable parameter that transmutes the auxiliary objective from a convex to a concave function. 
This allows a stable convex starting point, followed by a controlled path toward a sparse binary solution as the tuning parameter increases and the objective moves toward concavity. 
In practice, our method matches commercial solvers in asset selection for most instances and, in rare instances, 
the solution differs by a few assets whilst showing a negligible error in portfolio variance.\\
\end{abstract}

\section{Introduction}
We propose a sparse portfolio selection framework that is computationally fast. To ground our ideas, we consider a specific instantiation of this framework---one designed for tractability and rigorous theoretical analysis. 
In particular, we consider the problem of minimizing portfolio variance across a universe of $p$ assets under a sparsity constraint that restricts investments to at most $k$ assets.
Minimum-variance portfolio is a cornerstone of portfolio theory, providing a principled risk-reduction strategy by focusing solely on return volatility. Originating in the seminal mean-variance framework of \citet{markowitz1952portfolio}, the minimum-variance portfolio remains widely adopted in practice \citep{clarke2011minimum}, especially when return forecasts are unreliable or difficult to estimate. Recent advances in portfolio selection—as discussed in Section~\ref{ref:related_work}---prioritize simplicity by enforcing sparsity (i.e., limiting non-zero asset weights). Sparsity enhances interpretability, mitigates estimation error, and improves practical feasibility \citep{hastie2015statistical, brodie2009sparse, fan2012vast, demiguel2009generalized}. In contrast, conventional portfolio optimization often yields dense allocations, where investing in assets with negligible weights inflates transaction costs and amplify sensitivity to covariance matrix estimation errors \citep{gao2013optimal}.

The sparse minimum-variance problem (see Section \ref{subsec:sparse_portfolio_selection}) is naturally framed as a mixed-integer quadratic program. Specialized commercial solvers---including branch-and-bound algorithms \citep{land1960} which guarantee globally optimal solutions given adequate computational time---exist for such problems. However, their runtimes scale exponentially with problem size and become infeasible in practical applications for moderately large $p$ and $k$.

We recast the sparse minimum-variance portfolio problem as a binary-constrained optimization over the binary $k$-cube $\{\v s \in \{0,1\}^p: \sum_{j = 1}^p s_j = k\}$. Drawing on the Boolean relaxation approach for best subset selection in regression by \citet{COMBSS2022}, we introduce a continuous auxiliary objective function defined on the simplex $\{ \v t \in [0, 1]^p: \sum_{j = 1}^p t_j = k\}$, which coincides with the reformulated function's values on the binary $k$-cube. Moreover, the auxiliary objective function has a positive tuning parameter that gradually transitions this auxiliary function from convex to concave as it increases. The convex phase facilitates stable initialization, while the concave phase ensures equivalence between the auxiliary function’s minima and those of the original problem. Building on these properties, we develop a variant of the {\em Frank-Wolfe} (conditional gradient) method that iteratively increases the parameter during optimization, thereby guiding the objective from convexity to concavity to reach the optimal solution of the binary constrained problem.

From a theoretical standpoint, as the tuning parameter increases continuously, our method is guaranteed to converge to an optimal solution of the original problem with binary constraints. In practical implementations, where the parameter is discretized, solutions may marginally deviate from optimality. Nonetheless, extensive numerical experiments on synthetic and real-world datasets demonstrate that such deviations are limited to a small number of assets, with minimal error in the objective value. Importantly, our algorithm eliminates combinatorial complexity entirely, enabling a computationally efficient and scalable solution.

The remainder of the paper is organized as follows. Section~\ref{sec:sps} formulates the sparse portfolio optimization problem and reviews existing approaches. Section~\ref{sec:bool-relax} introduces our Boolean relaxation framework, establishes its theoretical guarantees, and presents our algorithm. Section~\ref{sec:sims} demonstrates the performance of our method on synthetic and real-world financial datasets. Finally, Section~\ref{sec:conc} concludes the paper and outlines future research. All the theoretical results are proved in Appendix~\ref{app:proofs}.

\section{Sparse portfolio optimization}
\label{sec:sps}
In this section, we first introduce the notation employed in this paper and then present the formulation of the sparse portfolio selection problem. We use boldface notation for column vectors, for example, $\v u \in \reals^p$ denotes a $p$-dimensional real vector. We use $\v u^\top$ to denote the transpose of a vector $\v u$. The all-zeros vector, the all-ones vector, and the identity matrix are denoted by $\v 0$, $\v 1$, and $I$, respectively, and their dimensions are clear from the context.  Capital letters denote matrices, unless otherwise mentioned. For a binary vector $\v s \in \{0, 1\}^p$ that denotes which assets are selected or not, $|\v s|=\sum_j^p s_j$ denotes the number of selected assets. Moreover, for a $p\times p$-dimensional matrix $A$, $A_{[\v s]} \in \reals^{|\v s|\times |\v s|}$ is the square sub-matrix of $A$ obtained by removing all columns and rows of $A$ for which $s_j = 0$, $j=1,\ldots,p$. Similarly, the $|\v s|$-dimensional vector $\v u_{[\v s]}$ keeps the elements of the $p$-dimensional vector $\v u \in \reals^p$ corresponding to $s_j=1$. 
Throughout, two optimization problems are said to be equivalent if the solutions of one problem provides the solutions to the other, and vice versa.

\subsection{Minimum-variance portfolio optimization}
Let \(\v x_\tau = (x_{\tau, 1}, \dots, x_{\tau, p})^\top \in \reals^p\) denote the vector of returns for \(p\) assets at time \(\tau\). Consider a portfolio with weights \(\v \beta = (\beta_1, \dots, \beta_p)^\top\) such that the sum of the weights equals one, i.e., \(\v \beta^\top \v 1 = \sum_{j=1}^p \beta_j = 1\). The portfolio return at time \(\tau\) is \(\v \beta^\top \v x_\tau\). The mean return of the portfolio is given by \(\v \beta^\top \v \mu\), and its risk (volatility) is characterized by the variance \(\v \beta^\top \Sigma \v \beta\), where \(\v \mu\) represents the mean vector of asset returns and \(\Sigma\) is the covariance matrix of \(\v x_\tau\). 

The minimum-variance portfolio is an asset allocation strategy that constructs a portfolio with the lowest possible risk, measured as the variance of portfolio returns, regardless of expected returns. This strategy is ideal for risk-averse investors who prioritize minimizing risk over maximizing returns and is particularly useful when asset returns are difficult to forecast. The asset allocation for such an investor is given by the optimal weights obtained by solving
\begin{align}
\minimize_{\v \beta \in \reals^p} \v \beta^\top \Sigma \v \beta, \quad \subjectto \v \beta^\top \v 1 = 1.
\label{eqn:mvp}
\end{align}
We make the following key assumption throughout the paper.
\begin{assumption}\label{ass:main}
The covariance matrix $\Sigma$ is positive definite.
\end{assumption}
This assumption is not restrictive in practice because we can obtain $\Sigma$ using a positive definite covariance matrix estimator. Moreover, in practice, 
problem \eqref{eqn:mvp} is often augmented with a ridge regularization term by replacing the covariance matrix $\Sigma$ with $\Sigma + \lambda I$ for some $\lambda > 0$; see \cite{fastrich2015constructing, roncalli2013introduction}. In this case, Assumption~\ref{ass:main} holds trivially. 
The following lemma gives the optimal weights (i.e., the unique solution to \eqref{eqn:mvp}) under this assumption.
\begin{lemma}
\label{lem:key-res2}
The solution to the minimum-variance problem \eqref{eqn:mvp} is
\begin{align*}
\wh {\v \beta} = \frac{\Sigma^{-1} \v 1 }{\v 1^\top \Sigma^{-1} \v 1}.
\end{align*}
\end{lemma}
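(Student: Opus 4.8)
The plan is to exploit that, under Assumption~\ref{ass:main}, the objective $\v\beta^\top \Sigma \v\beta$ in \eqref{eqn:mvp} is a strictly convex quadratic while the feasible region is the affine hyperplane $\{\v\beta \in \reals^p : \v\beta^\top \v 1 = 1\}$, which is nonempty (it contains, e.g., $\v 1/p$). A strictly convex function restricted to a nonempty affine set has a unique global minimizer, and under the linear constraint that minimizer is precisely the unique stationary point of the associated Lagrangian. So it suffices to solve the first-order conditions and to check that the resulting expression is well defined.

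First I would introduce a single multiplier $\nu$ for the equality constraint and form $L(\v\beta,\nu) = \v\beta^\top \Sigma\v\beta - \nu(\v\beta^\top \v 1 - 1)$. Setting $\nabla_{\v\beta} L = 2\Sigma\v\beta - \nu\v 1 = \v 0$ and using invertibility of $\Sigma$ (guaranteed by Assumption~\ref{ass:main}) gives $\v\beta = \tfrac{\nu}{2}\,\Sigma^{-1}\v 1$. Next I would pin down $\nu$ by substituting back into the constraint: $\v\beta^\top\v 1 = \tfrac{\nu}{2}\,\v 1^\top \Sigma^{-1}\v 1 = 1$, which yields $\tfrac{\nu}{2} = 1/(\v 1^\top \Sigma^{-1}\v 1)$ and hence $\wh{\v\beta} = \Sigma^{-1}\v 1/(\v 1^\top \Sigma^{-1}\v 1)$. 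This candidate manifestly satisfies the constraint by construction.

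The one small fact requiring care is that the denominator $\v 1^\top \Sigma^{-1}\v 1$ is strictly positive, so that the division is legitimate: since $\Sigma \succ 0$ its inverse $\Sigma^{-1}$ is also positive definite, and $\v 1 \neq \v 0$, so the quadratic form is strictly positive. Thus Assumption~\ref{ass:main} is invoked twice—once for invertibility of $\Sigma$ and once for positivity of the denominator.

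I do not anticipate a genuine obstacle here; the argument is routine convex optimization. The only point beyond the computation that deserves an explicit remark is the justification that the stationary point is the \emph{global} minimizer rather than merely a critical point. This follows from strict convexity of the objective together with linearity of the constraint: the Hessian $2\Sigma$ is positive definite on all of $\reals^p$, and in particular on the tangent space $\{\v d \in \reals^p : \v d^\top \v 1 = \v 0\}$ of the feasible hyperplane, so the second-order sufficient condition holds and the minimizer is unique.
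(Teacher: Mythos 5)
Your proposal is correct and follows essentially the same route as the paper, whose proof simply notes convexity with a unique minimum and invokes the method of Lagrange multipliers. You merely spell out the details the paper leaves implicit (the multiplier computation, positivity of $\v 1^\top \Sigma^{-1} \v 1$, and the strict-convexity argument for global uniqueness), all of which are sound.
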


\subsection{Sparse portfolio selection}\label{subsec:sparse_portfolio_selection}
The minimum-variance optimization \eqref{lem:key-res2} yields dense portfolios, assigning nonzero weights to all assets. In many applications, some of the weights may be very small and, ideally, such assets should not be invested in to avoid transaction costs. Moreover, including only the relevant assets may reduce estimation errors. To mitigate these issues, {\em sparse portfolio selection} enforces an additional constraint on the number of assets selected, resulting in the optimization problem: For an integer $k \geq 1$,
\begin{align}
    \minimize_{\v \beta \in \reals^p} \v \beta^\top \Sigma \v \beta, \quad \subjectto \v 1^\top \v \beta = 1, \,\, \|\v \beta\|_0 \leq k,
    \label{eqn:opt-bss}
\end{align}
where $\|\v \beta\|_0$ denotes the number of non-zero elements in $\v \beta$. This NP-hard combinatorial problem makes naive exhaustive searches---evaluating all possible subsets of $k$ assets from $p$---computationally intractable even for moderately large $k$ and $p$ \citep{Natarajan1995, BKM16}.

A more efficient way to tackle problem \eqref{eqn:opt-bss} is via mixed-integer quadratic programming; for a comprehensive survey of its use in portfolio selection, refer to \cite{mencarelli2019complex}. A widely adopted technique in mixed-integer optimization is the {\em Big-M formulation}, which enforces logical conditions---such as whether asset $j$ is included ($s_j=1$) or excluded ($s_j=0$)---by embedding them directly into linear constraints. 
In particular, the Big-M formulation of problem \eqref{eqn:opt-bss} is  
\begin{align}
\begin{aligned}
    &\minimize_{\v \beta \in \reals^p, \,\, \v s \in \{0,1\}^p} \v \beta^\top \Sigma \v \beta,\\  
    &\hspace{4mm}\subjectto \v 1^\top \v \beta = 1, \,\,
    \v 1^\top \v s \leq k,\,\, \text{and}\\
    &\hspace{1.7cm} -M s_j \leq \beta_j \leq Ms_j, \,\,\, \forall j = 1, \dots, p,
\end{aligned}
\label{eqn:big-M}
\end{align}
where $M>0$ is such that at every optimal solution $\v \beta^*$ of \eqref{eqn:opt-bss}, $\max_j |\beta_j^*| \leq M$. This formulation ensures that $\beta_j = 0$ if and only if $s_j = 0$. 

Commercial solvers such as \texttt{CPLEX} \citep{cplex} and \texttt{Gurobi} \citep{gurobi} employ branch‐and‐bound algorithms \citep{land1960} to tackle formulation~\eqref{eqn:big-M}. Although they are guaranteed to converge to the global optimum given sufficient time, their computational cost grows exponentially, rendering them impractical for even moderately large~$k$ and~$p$.  

\subsection{Limitations and related work}
\label{ref:related_work}
As a first instantiation of our framework, we focus on a tractable setting that excludes additional weight constraints (e.g., short selling or minimum investments), as well as mean-variance objectives and risk-free assets. We leave these extensions for future work. Although we prove that our method recovers the exact sparse $k$ optimal solution as the tuning parameter increases continuously, this guarantee breaks down in practical implementations, where the tuning parameter is discretized and each step is only solved approximately using a gradient-based procedure. In contrast, certifiable optimization methods (e.g., the Big-M formulation) can guarantee optimality given enough computational budget. However, our method is computationally feasible for large-scale problems and achieves better solutions than the Big-M formulation under a fixed computational budget. 

Bertsimas and coauthors \citep{bertsimas2022scalable, bertsimas2021unified} present a major advance in scalable, certifiably optimal sparse portfolio selection under a mean-variance objective with weight constraints (e.g., minimum investments, no short selling). They incorporate a ridge penalty to improve tractability, and their cutting-plane algorithm outperforms the Big-M formulation for the ridge-regularized problem while converging faster with stronger regularization. However, this reliance on a regularization term makes their approach less applicable to our penalty-free setting. Other important works in cardinality-constrained minimization is surveyed in \cite{tillmann2024cardinality}.

\section{Methodology}
\label{sec:bool-relax}
To overcome the combinatorial bottleneck inherent in problem \eqref{eqn:opt-bss}, we introduce a Boolean relaxation that replaces the binary constraints with continuous ones, yielding an auxiliary objective function whose gradient and Hessian can be computed in closed form. This enables scalable continuous optimization.  

\subsection{Boolean relaxation}
The first step in our Boolean relaxation of the sparse portfolio selection problem \eqref{eqn:opt-bss} is to rewrite it as a binary constrained problem. 
It follows from Assumption \ref{ass:main} that 
each principal submatrix $\Sigma_{[\v s]}$ is invertible  for every $\v s \neq \v 0$ \citep{bhatia2009positive}. Consequently, \eqref{eqn:opt-bss} is equivalent to the binary constrained problem
\begin{align}
    \minimize_{\v s \in \{0, 1\}^p} \minimize_{\v \beta_{[\v s]} \in \reals^{|\v s|}} \v \beta_{[\v s]}^\top \Sigma_{[\v s]} \v \beta_{[\v s]}, \quad \subjectto \v 1^\top \v \beta = 1, \,\, \v 1^\top \v s \leq k,
    \label{eqn:opt-bc}
\end{align}
where in $\v \beta$ (without subscript), $\beta_j = 0$ for $s_j=0$. Using the notation $\Sigma_{[\v s]}^{-1} = (\Sigma_{[\v s]})^{-1}$, from Lemma~\ref{lem:key-res2}, 
$
\hbeta_{[\v s]} = \Sigma_{[\v s]}^{-1} \v 1/(\v 1^\top \Sigma_{[\v s]}^{-1} \v 1)
$ is the solution of the inner minimization in \eqref{eqn:opt-bc}. Thus, the problem can be rewritten as
\begin{align}
    \minimize_{\v s \in \{0, 1\}^p} \frac{1}{\v 1^\top  \Sigma_{[\v s]}^{-1} \v 1}, \quad \subjectto \v 1^\top \v s \leq k,
    \label{eqn:opt-bc4}
\end{align}
or, equivalently, 
\begin{align}
    \minimize_{\v s \in \{0, 1\}^p} - \v 1^\top  \Sigma_{[\v s]}^{-1} \v 1, \quad \subjectto \v 1^\top \v s \leq k,
    \label{eqn:opt-bc3}
\end{align}
where we take $\v 1^\top  \Sigma_{[\v s]}^{-1} \v 1$ to be zero when $\v s = \v 0$. Note that solving \eqref{eqn:opt-bc3}, which we refer to as the {\em target problem}, is equivalent to solving the {\em original problem} \eqref{eqn:opt-bss}. 
Theorem~\ref{thm:monotonicity} establishes the monotonicity of the optimal value of \eqref{eqn:opt-bc3} as a function of $k$. In particular, it implies that the inequalities in \eqref{eqn:opt-bc}, \eqref{eqn:opt-bc4} and \eqref{eqn:opt-bc3} can be replaced with equalities. In other words, inclusion of additional assets (increasing $k$) can only improve (lower) or retain the same variance of \eqref{eqn:opt-bss}. 
\begin{theorem}
\label{thm:monotonicity}
The optimal value of the target problem \eqref{eqn:opt-bc3}
is non-increasing in $k\in \{0, 1,\dots, p \}$. 
\end{theorem}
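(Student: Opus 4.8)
The plan is to reduce the statement to a single monotonicity property of the per-selection objective $f(\v s) := -\v 1^\top \Sigma_{[\v s]}^{-1} \v 1$ under enlargement of the selected set, and then lift this to the optimal value. For the literal claim --- that $V(k) := \min\{f(\v s) : \v 1^\top \v s \le k\}$ is non-increasing in $k$ --- the feasible sets are nested, since every $\v s$ with $\v 1^\top \v s \le k$ also satisfies $\v 1^\top \v s \le k+1$; minimizing a fixed objective over a larger set can only decrease the optimum, so $V(k+1) \le V(k)$ immediately. The substantive content, which also underlies the remark that the inequality $\v 1^\top \v s \le k$ may be replaced by equality, is the following domination lemma: if $\v s \le \v s'$ componentwise (i.e.\ the assets selected by $\v s$ form a subset of those selected by $\v s'$), then $f(\v s') \le f(\v s)$.

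To prove the domination lemma I would avoid block-inverse algebra and argue variationally. By Lemma~\ref{lem:key-res2} applied to the subproblem on the support of $\v s$, the quantity $1/(\v 1^\top \Sigma_{[\v s]}^{-1} \v 1)$ equals the optimal value of the inner minimum-variance problem restricted to portfolios supported on the assets selected by $\v s$, namely $\min\{\v \beta^\top \Sigma \v \beta : \v 1^\top \v \beta = 1,\ \beta_j = 0 \text{ for } s_j = 0\}$. Any feasible portfolio for $\v s$ is also feasible for $\v s'$ (extend it by zeros on the extra coordinates), so the feasible set grows as the support grows, and the optimal variance can only decrease:
\begin{equation*}
\frac{1}{\v 1^\top \Sigma_{[\v s']}^{-1} \v 1} \;\le\; \frac{1}{\v 1^\top \Sigma_{[\v s]}^{-1} \v 1}.
\end{equation*}
Under Assumption~\ref{ass:main} both $\Sigma_{[\v s]}$ and $\Sigma_{[\v s']}$ are positive definite for nonzero selections, so both denominators are strictly positive; taking reciprocals reverses the inequality to give $\v 1^\top \Sigma_{[\v s']}^{-1} \v 1 \ge \v 1^\top \Sigma_{[\v s]}^{-1} \v 1$, i.e.\ $f(\v s') \le f(\v s)$, as claimed.

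Assembling the pieces: let $\v s^\star$ attain $V(k)$. If $\v 1^\top \v s^\star < k$, pick any unselected asset (one exists since $\v 1^\top \v s^\star < k \le p$) and set its indicator to one to obtain $\v s' \ge \v s^\star$ with $\v 1^\top \v s' = \v 1^\top \v s^\star + 1 \le k$; the domination lemma gives $f(\v s') \le f(\v s^\star) = V(k)$. Iterating, there is always a minimizer of the budget-$k$ problem that uses exactly $\min(k,p)$ assets, which justifies replacing $\le$ by $=$. Combined with the nested-set inequality $V(k+1) \le V(k)$, this proves the stated monotonicity and its corollary. The only points requiring care --- and the closest thing to an obstacle --- are bookkeeping: handling the convention $\v 1^\top \Sigma_{[\v 0]}^{-1} \v 1 = 0$ at $k = 0$ (for which $V(0) = 0$ while $V(1) < 0$, consistent with monotonicity), and correctly tracking the sign reversal when passing between the variance $1/(\v 1^\top \Sigma_{[\v s]}^{-1} \v 1)$ and the target objective $-\v 1^\top \Sigma_{[\v s]}^{-1} \v 1$, both of which are controlled by the strict positivity supplied by Assumption~\ref{ass:main}.
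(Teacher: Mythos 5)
Your proposal is correct, and it reaches the key domination lemma---that enlarging the support can only increase $\v 1^\top \Sigma_{[\v s]}^{-1} \v 1$---by a genuinely different route from the paper. You argue variationally: by Lemma~\ref{lem:key-res2}, the quantity $1/(\v 1^\top \Sigma_{[\v s]}^{-1} \v 1)$ is the optimal value of the minimum-variance problem restricted to portfolios supported on $\v s$; these feasible sets are nested under support inclusion (extending a portfolio by zeros leaves its variance unchanged), so the optimal variance is monotone, and taking reciprocals---legitimate since Assumption~\ref{ass:main} makes every $\v 1^\top \Sigma_{[\v s]}^{-1} \v 1$ strictly positive for $\v s \neq \v 0$---yields the lemma. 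The paper instead proves the same one-coordinate augmentation inequality algebraically: it writes $\Sigma_{[\v s']}$ in block form with leading block $\Sigma_{[\v s]}$, border vector $\v a$ and scalar $b$, and applies the Banachiewicz inversion formula to obtain the exact identity
\[
\v 1^\top \Sigma_{[\v s']}^{-1} \v 1 \;=\; \v 1^\top \Sigma_{[\v s]}^{-1} \v 1 + \frac{1}{c}\left(\v 1^\top \Sigma_{[\v s]}^{-1} \v a - 1\right)^2,
\qquad c = b - \v a^\top \Sigma_{[\v s]}^{-1} \v a,
\]
together with a separate congruence argument showing $c > 0$. Your route is more elementary and arguably more transparent---monotonicity becomes a pure feasible-set-nesting statement, you avoid the block-inverse computation and the positivity-of-$c$ verification entirely, and the argument would generalize verbatim to other objectives minimized over nested supports. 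What the paper's computation buys in exchange is quantitative: an exact formula for the improvement from adding one asset, showing the gain is zero precisely when $\v 1^\top \Sigma_{[\v s]}^{-1} \v a = 1$, information a pure inclusion argument cannot provide. Your assembly of the pieces is also sound and in fact slightly more careful than the paper's: the trivial nesting inequality $V(k+1) \le V(k)$, the justification that the budget constraint can be made an equality (which the paper asserts as a consequence of this theorem), and the $k = 0$ convention $\v 1^\top \Sigma_{[\v 0]}^{-1} \v 1 = 0$ are all handled explicitly, whereas the paper works with the equality-constrained value $h(k)$ and uses the same one-step augmentation to conclude $h(k) \le h(k+1)$ without addressing the $k=0$ endpoint.
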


We now provide a Boolean relaxation of \eqref{eqn:opt-bc3} as an auxiliary continuous function on $[0, 1]^p$, controlled by a tuning parameter $\delta > 0$. 
To simplify the notation, define 
\begin{align}
T_{\v t} = \diag(\v t) \quad 
\text{and} \quad 
\wt \Sigma_{\v t} = T_{\v t}\Sigma T_{\v t} + \delta (I - T_{\v t}^2).
\label{eqn:defn-Sigma}
\end{align}
Then, our proposed Boolean relaxation of \eqref{eqn:opt-bc3} is given by
\begin{align}
    \label{eqn:opt-br2}
    \minimize_{\v t \in \cC_k} f_\delta(\v t), \quad \text{where}\,\,\,\,
f_\delta(\v t) = - \v t^\top {\wt \Sigma}_{\v t}^{-1} \v t,
\end{align}
and for each $k$ the constraint set $\cC_k$ is a polytope defined as
\begin{align}
\cC_k = \{ \v t \in [0, 1]^p : \v t^\top \v 1 \leq k\}.
\label{eqn:defn-polytope}
\end{align}

\subsection{Theoretical properties}
The following result, Theorem~\ref{thm:relaxation-properties}, shows why \eqref{eqn:opt-br2} is a relaxation of the target problem \eqref{eqn:opt-bc3}. It shows that $f_\delta(\v t)$ is continuous on the hypercube $[0, 1]^p$ and its shape can be controlled by the auxiliary parameter $\delta$ while keeping the values of $f_\delta(\v t)$ fixed---independent of $\delta$---at all the (binary) corners $\v s \in \{ 0,1\}^p$. In addition, (iii) shows that $f_\delta(\v t)$ increases with $\delta$ for any fixed interior point $\v t$---see Figure~\ref{fig:surface-plots}(a) for an illustration---while (iv) shows that the optimum of \eqref{eqn:opt-br2} is on a simplex.
\begin{theorem}
    \label{thm:relaxation-properties}
    The following hold:
    \begin{itemize}
        \item[(i)] The objective function $f_\delta(\v t)$ in \eqref{eqn:opt-br2} is continuous on $[0,1]^p$.
        \item[(ii)] For every binary vector $\v s \in \{0, 1\}^p$ (i.e., a corner point on the hypercube $[0, 1]^p$),
        \[
        f_\delta(\v s) = - \v 1^\top  \Sigma_{[\v s]}^{-1} \v 1, \quad \text{for all}\,\, \delta > 0.
        \]
        \item[(iii)] For every fixed $\v t \in (0, 1)^p$, $f_\delta(\v t)$ is monotonically increasing in $\delta > 0$.
        \item[(iv)] For any $k =1, \dots, p$ and $\delta > 0$, 
        \[
        \minimize_{\v t \in \cC_k} f_\delta(\v t) = \minimize_{\v t \in \cS_k} f_\delta(\v t),
        \]
        here, the simplex $\cS_k = \{\v t \in [0, 1]^p : \v t^\top \v 1 =k \}$ corresponds to the polytope $\cC_k$ given in \eqref{eqn:defn-polytope}.
    \end{itemize}
\end{theorem}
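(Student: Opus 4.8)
The plan is to treat the four parts in turn, with the first three following from direct structural computations and the fourth requiring a variational reformulation. For (i), the crucial observation is that $\wt\Sigma_{\v t}$ is positive definite—hence invertible—at every $\v t\in[0,1]^p$. Indeed, for $\v v\neq\v 0$ I would write $\v v^\top\wt\Sigma_{\v t}\v v = (T_{\v t}\v v)^\top\Sigma(T_{\v t}\v v) + \delta\sum_i(1-t_i^2)v_i^2$, exhibiting both summands as nonnegative (using Assumption~\ref{ass:main} and $t_i\in[0,1]$), and argue they cannot vanish simultaneously: the second term vanishing forces $t_i=1$ wherever $v_i\neq 0$, which makes $T_{\v t}\v v=\v v\neq\v 0$ and the first term strictly positive. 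Positive definiteness makes $\v t\mapsto\wt\Sigma_{\v t}^{-1}$ continuous (matrix inversion is continuous on the open set of invertible matrices, and $\wt\Sigma_{\v t}$ depends polynomially on $\v t$), so $f_\delta(\v t)=-\v t^\top\wt\Sigma_{\v t}^{-1}\v t$ is continuous, giving (i).

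For (ii), I would evaluate at a binary $\v s$: since $T_{\v s}$ is idempotent, $I-T_{\v s}^2=I-T_{\v s}$, and reordering coordinates so that the selected indices come first exhibits $\wt\Sigma_{\v s}$ as the block-diagonal matrix $\diag(\Sigma_{[\v s]},\,\delta I)$. Inverting block-wise and using that $\v s$ is $\v 1$ on the selected block and $\v 0$ elsewhere collapses the quadratic form to $\v 1^\top\Sigma_{[\v s]}^{-1}\v 1$, with no residual $\delta$-dependence; the degenerate case $\v s=\v 0$ matches the stated convention. For (iii), I would differentiate in $\delta$. Since $\partial_\delta\wt\Sigma_{\v t}=I-T_{\v t}^2$, the identity $\partial_\delta\wt\Sigma_{\v t}^{-1}=-\wt\Sigma_{\v t}^{-1}(I-T_{\v t}^2)\wt\Sigma_{\v t}^{-1}$ yields $\partial_\delta f_\delta(\v t)=\v u^\top(I-T_{\v t}^2)\v u$ with $\v u=\wt\Sigma_{\v t}^{-1}\v t$. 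For $\v t\in(0,1)^p$ the diagonal matrix $I-T_{\v t}^2$ is positive definite and $\v u\neq\v 0$ (as $\v t\neq\v 0$), so the derivative is strictly positive, establishing strict monotonicity in $\delta$.

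The substance is (iv), for which I would establish that $f_\delta$ is non-increasing in each coordinate $t_i$ on $[0,1]^p$ and then push any minimizer to the simplex. The key device is the variational identity, valid because $\wt\Sigma_{\v t}\succ 0$,
\[
-f_\delta(\v t)=\v t^\top\wt\Sigma_{\v t}^{-1}\v t=\sup_{\v z\in\reals^p}\left(2\v t^\top\v z-\v z^\top\wt\Sigma_{\v t}\v z\right),
\]
followed by the substitution $\v w=T_{\v t}\v z$. After simplification this recasts the supremum, over the coordinates with $t_i>0$, as
\[
\sup_{\v w}\left(2\v 1^\top\v w-\v w^\top\Sigma\v w-\delta\sum_{i}\frac{1-t_i^2}{t_i^2}\,w_i^2\right),
\]
where coordinates with $t_i=0$ are constrained to $w_i=0$. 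Crucially, the penalty weight $(1-t_i^2)/t_i^2$ is non-increasing in $t_i$, so for each fixed feasible $\v w$ the bracketed objective is non-decreasing in every $t_i$; taking the supremum preserves this, so $-f_\delta$ is non-decreasing and hence $f_\delta$ non-increasing coordinatewise. I would prove the comparison by plugging the maximizer $\v w^\ast$ for the smaller $\v t$ into the objective for the larger $\v t'$, treating the boundary transition $t_j=0\mapsto t_j'>0$ separately, where $w_j^\ast=0$ and the changed term contributes zero in both evaluations.

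With coordinatewise monotonicity in hand, (iv) follows by a transport argument: since $\cS_k\subseteq\cC_k$ gives $\min_{\cC_k}f_\delta\le\min_{\cS_k}f_\delta$ immediately, it suffices to move a minimizer $\v t^\ast$ of $f_\delta$ over the compact set $\cC_k$ (which exists by continuity from (i)) onto $\cS_k$ without increasing $f_\delta$. If $\v 1^\top\v t^\ast<k$, I would raise coordinates that are below $1$—possible because the total slack $\sum_i(1-t_i^\ast)=p-\v 1^\top\v t^\ast\ge k-\v 1^\top\v t^\ast>0$—until $\v 1^\top\v t=k$; monotonicity guarantees $f_\delta$ does not increase along this path, producing a point of $\cS_k$ that attains $\min_{\cC_k}f_\delta$. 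I expect the main obstacle to be making the variational reformulation and its boundary behavior rigorous (as $t_i\to 0$ the change of variables degenerates and the penalty weight diverges); everything else is bookkeeping around positive definiteness.
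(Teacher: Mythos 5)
Your proof is correct in all four parts, and the interesting divergence from the paper is in part (iv). For (i)--(iii) you are close to the paper's route, with one worthwhile difference in (i): the paper proves continuity by a sequence argument, decomposing $\wt\Sigma_{\v t_\ell} = \wt\Sigma_{\v t} + E_\ell$ into blocks and invoking a Neumann-series perturbation bound, whereas you prove positive definiteness of $\wt\Sigma_{\v t}$ uniformly on the closed cube (via the decomposition $\v v^\top\wt\Sigma_{\v t}\v v = (T_{\v t}\v v)^\top\Sigma(T_{\v t}\v v) + \delta\sum_i(1-t_i^2)v_i^2$ and the observation that both terms cannot vanish simultaneously) and then appeal to continuity of matrix inversion on the open set of invertible matrices; your variant is cleaner and makes explicit the boundary invertibility that the paper's Neumann argument uses only implicitly. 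Your (ii) is the paper's argument verbatim (block-diagonal structure $\diag\bigl(\Sigma_{[\v s]}, \delta I\bigr)$), and your (iii) is the same computation performed directly on $\wt\Sigma_{\v t}^{-1}$ rather than on the interior reformulation $\Pi_{\v t} = \Sigma + \delta(T_{\v t}^{-2}-I)$ of Lemma~\ref{lem:simplified-f}; the two are equivalent under the similarity transform $T_{\v t}$. For (iv), the paper reads coordinatewise monotonicity of $f_\delta$ off the closed-form gradient of Lemma~\ref{lem:grad-hess}, whose entries $-2\delta(\Pi_{\v t}^{-1}\v 1)^2/\v t^3$ are nonpositive---an argument that strictly speaking lives on the open cube $(0,1)^p$, with the extension to the boundary left implicit. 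You instead obtain monotonicity from the variational representation $-f_\delta(\v t) = \sup_{\v w}\bigl(2\v 1^\top\v w - \v w^\top\Sigma\v w - \delta\sum_{i:t_i>0}\tfrac{1-t_i^2}{t_i^2}w_i^2\bigr)$ (with $w_i = 0$ forced where $t_i = 0$), exhibiting $-f_\delta$ as a supremum of functions each non-decreasing in every $t_i$; this buys you monotonicity on the entire closed cube, including the delicate transitions $t_j = 0 \mapsto t_j' > 0$, with no gradient or Hessian computation at all, at the modest price of checking the degenerate coordinates (the free $z_i$ with $t_i=0$ is optimized to zero by the $-\delta z_i^2$ penalty, which you note). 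Your final transport step---raising coordinates of a minimizer with $\v 1^\top\v t^* < k$ until it lands on $\cS_k$, feasible since $p \ge k$---matches the paper's conclusion and is fully rigorous.
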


\begin{figure}[ht]
    \centering
    \begin{subfigure}[b]{0.57\textwidth}
    \raggedleft
    \includegraphics[width=1\textwidth, trim=50 70 10 110, clip]{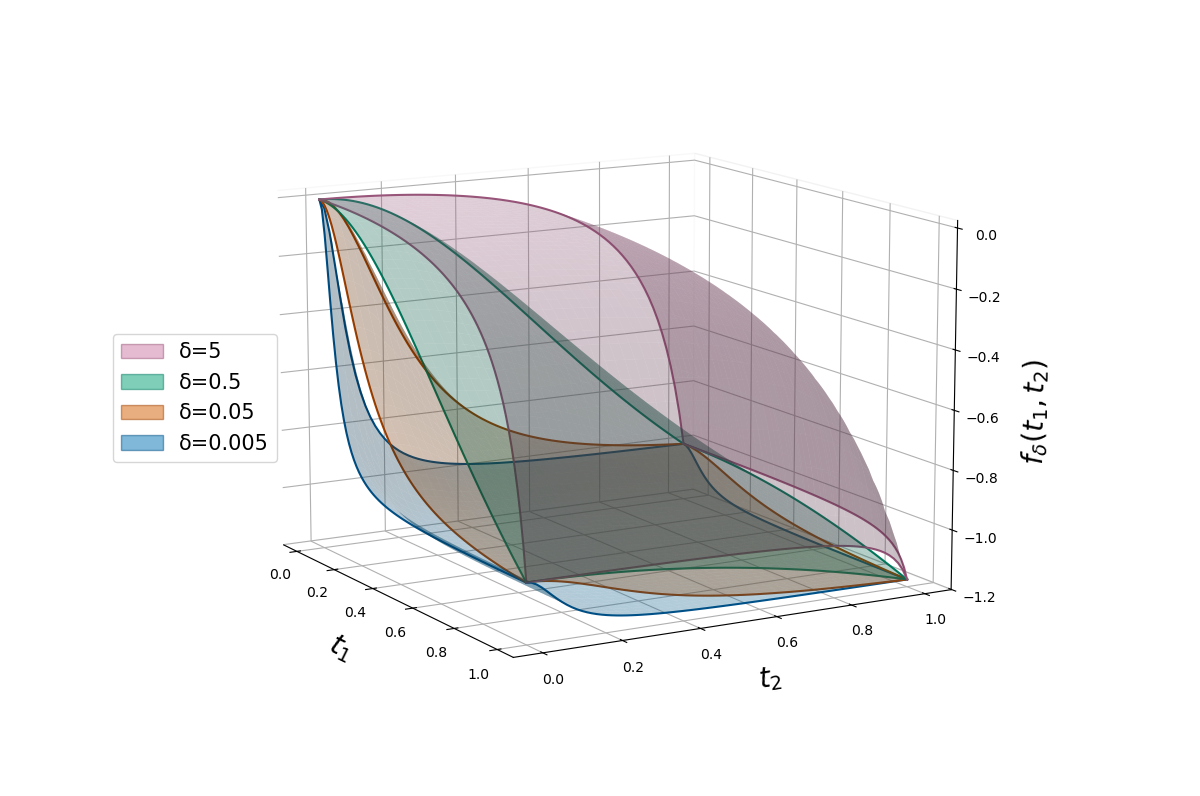}
    \caption{}
    \end{subfigure}
    \hspace{-0.1em} 
    \begin{subfigure}[b]{0.42\textwidth}
    \centering
    \includegraphics[width=1\textwidth, trim=5 0 0 20, clip]{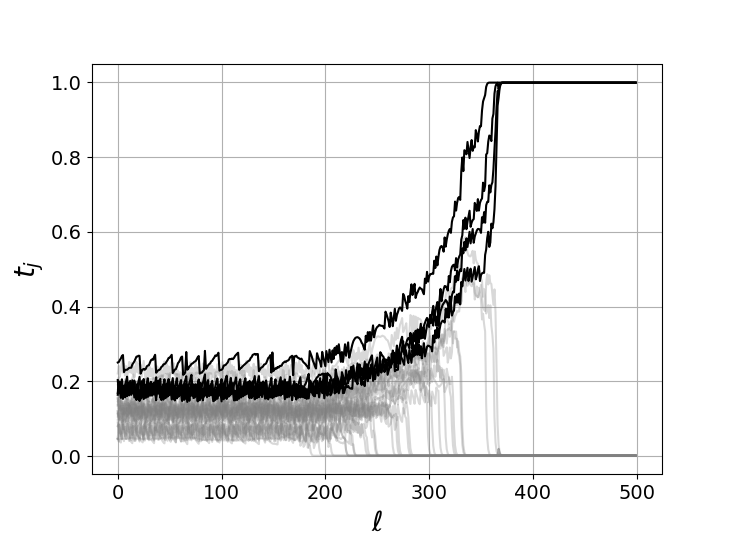}
    \caption{}
    \end{subfigure}
    \caption{\small Panel (a): Surface plots of the auxiliary objective function $f_\delta(\v t)$ for a $2\times 2$-dimensional covariance matrix $\Sigma$ for different values of $\delta$, displaying convexity for $\delta = 0.005$ and concavity for $\delta = 5$. The values of the function at the corners $\v s \in \{0, 1\}^2$ correspond to those of the discrete function $-\v 1^\top \Sigma_{[\v s]}^{-1}\v 1$ in \eqref{eqn:opt-bc3}. Panel (b): Iterative convergence of $t_j$'s toward $0$ or $1$ for a dataset of $p=31$ assets when $k = 4$ during the execution of Algorithm~\ref{alg:alg}. The paths of the four $t_j$ that converge to $1$ are shown in black, while other paths are shown in grey.}  
    \label{fig:surface-plots}
\end{figure}
Our next result, Theorem~\ref{thm:delta-prperties}, shows that  $f_\delta(\v t)$ exhibits convexity for sufficiently small values of $\delta$ and concavity for large ones. Using a $2 \times 2$ covariance matrix, Figure~\ref{fig:surface-plots}(a) illustrates the convexity of $f_\delta(\v t)$ for small values of $\delta$ and concavity for large values of $\delta$, and non-convexity and non-concavity in-between. The proof of this result is based on Lemma~\ref{lem:grad-hess} in Appendix~\ref{app:proofs}, which derives the Hessian of $f_\delta(\v t)$.
\begin{theorem}
    \label{thm:delta-prperties}
    Let $\eta_1$ and $\eta_p$ be the largest and smallest eigenvalues of $\Sigma$, respectively. Then, 
    \begin{itemize}
        \item[(i)] 
        $f_\delta(\v t)$ strictly concave over $[0,1]^p$ for $\delta \geq \eta_1$, and
        \item[(ii)] 
        for any $\varepsilon \in (0, 1)$, $f_\delta(\v t)$ is strictly convex over $[\varepsilon, 1]^p$ for $ \delta \leq 3\eta_p\varepsilon^2/(1 + 3\varepsilon^2)$.
    \end{itemize}
\end{theorem}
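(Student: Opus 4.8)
The plan is to reduce both claims to the sign--definiteness of the Hessian $\nabla^2 f_\delta(\v t)$, so the first task is to put $f_\delta$ in a form amenable to twofold differentiation. I would begin by rewriting the matrix in \eqref{eqn:defn-Sigma} as $\wt\Sigma_{\v t} = \delta I + T_{\v t}(\Sigma-\delta I)T_{\v t}$ and checking that it is positive definite for every $\v t\in[0,1]^p$: for $\v z\neq \v 0$, either $T_{\v t}\v z\neq\v 0$, making $(T_{\v t}\v z)^\top\Sigma(T_{\v t}\v z)$ positive, or some coordinate has $t_i=0$, making $\delta(1-t_i^2)z_i^2>0$. Hence $f_\delta$ is smooth on $[0,1]^p$ and its Hessian is well defined; deriving a closed form is exactly the content of Lemma~\ref{lem:grad-hess}, which I would obtain from the identity $\partial_k\wt\Sigma_{\v t}^{-1}=-\wt\Sigma_{\v t}^{-1}(\partial_k\wt\Sigma_{\v t})\wt\Sigma_{\v t}^{-1}$.

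For part (i) the cleanest route is to exhibit $\nabla^2 f_\delta(\v t)$ as a sum of two negative--semidefinite matrices whenever $\delta\ge\eta_1$. Writing $B=\Sigma-\delta I$, $\v u=\wt\Sigma_{\v t}^{-1}\v t$, and $\v v=T_{\v t}\v u$, I expect the Hessian to take the form $\nabla^2 f_\delta(\v t)=2\,\diag(\v u)\,B\,\diag(\v u)-2P^\top\wt\Sigma_{\v t}^{-1}P$ for the explicit matrix $P=\diag(\v 1-B\v v)-T_{\v t}B\,\diag(\v u)$; equivalently, $f_\delta=-g$ with $g(\v t)=\max_{\v z}\{2\v z^\top\v t-\v z^\top\wt\Sigma_{\v t}\v z\}$, a pointwise maximum of functions whose $\v t$--Hessians are $-2\diag(\v z)B\diag(\v z)$. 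In either description, $\delta\ge\eta_1$ forces $B\preceq 0$, so both pieces are negative semidefinite and $f_\delta$ is concave. The delicate point is upgrading concavity to \emph{strict} concavity on the closed cube: one must argue the two semidefinite pieces share no common null direction (immediate when $\delta>\eta_1$, since then $B\prec0$ and $\diag(\v u)$ is generically invertible, but requiring a separate check on the faces $\{t_i=0\}$ and at the threshold $\delta=\eta_1$).

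For part (ii) I would switch to the interior representation valid on $[\varepsilon,1]^p\subset(0,1]^p$. Using $\v t=T_{\v t}\v 1$ and factoring $\wt\Sigma_{\v t}=T_{\v t}(\delta T_{\v t}^{-2}+B)T_{\v t}$ gives the simplification $f_\delta(\v t)=-\v 1^\top M^{-1}\v 1$ with $M=M(\v t)=\Sigma-\delta I+\delta T_{\v t}^{-2}$, a matrix depending on $\v t$ only through its diagonal. Differentiating $\v 1^\top M^{-1}\v 1$ in $s_i=t_i^{-2}-1$ and applying the chain rule yields, with $\v w=M^{-1}\v 1$ and $D_1=\diag(t_i^{-3}w_i)$, the compact expression $\nabla^2 f_\delta(\v t)=6\delta\,\diag(w_i^2 t_i^{-4})-8\delta^2 D_1 M^{-1}D_1$. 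The key move is then a congruence by $D_1^{-1}$: since $D_1^{-1}\diag(w_i^2 t_i^{-4})D_1^{-1}=T_{\v t}^2$, positive definiteness of the Hessian becomes $3T_{\v t}^2\succ 4\delta M^{-1}$, that is $M\succ\tfrac{4\delta}{3}T_{\v t}^{-2}$. Substituting $M=\Sigma-\delta I+\delta T_{\v t}^{-2}$ collapses this to the transparent Loewner inequality $\Sigma\succ\delta\big(I+\tfrac13 T_{\v t}^{-2}\big)$. Finally, on $[\varepsilon,1]^p$ one has $T_{\v t}^{-2}\preceq\varepsilon^{-2}I$, so the right-hand side is $\preceq\delta\,\tfrac{1+3\varepsilon^2}{3\varepsilon^2}I$, and $\Sigma\succeq\eta_p I$ makes the inequality hold as soon as $\delta\le 3\eta_p\varepsilon^2/(1+3\varepsilon^2)$, precisely the stated bound.

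The main obstacle is the reduction in part (ii): the naive estimate $M^{-1}\preceq\lambda_{\min}(M)^{-1}I\preceq\eta_p^{-1}I$ only yields the weaker threshold $3\eta_p\varepsilon^2/4$, and recovering the sharp constant $1+3\varepsilon^2$ hinges on spotting the $D_1^{-1}$--congruence that converts the two--term matrix inequality into the single clean condition $\Sigma\succ\delta(I+\tfrac13 T_{\v t}^{-2})$. A secondary obstacle, common to both parts, is bookkeeping the degeneracies---namely the possibility that some $w_i=0$ (so that $D_1$ fails to be invertible), the behaviour on the boundary faces, and strictness exactly at the endpoints $\delta=\eta_1$ and $\delta=3\eta_p\varepsilon^2/(1+3\varepsilon^2)$---which I would dispatch by a continuity or limiting argument rather than by the definiteness computation itself.
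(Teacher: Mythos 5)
Your proposal is correct in substance, but it splits between routes: part (ii) is essentially the paper's own proof, while part (i) takes a genuinely different one. For (ii), the paper likewise passes to the interior representation $f_\delta(\v t)=-\v 1^\top \Pi_{\v t}^{-1}\v 1$ with $\Pi_{\v t}=\Sigma+\delta(T_{\v t}^{-2}-I)$ (Lemma~\ref{lem:simplified-f}), computes the Hessian $2\delta\,\diag(\v u/\v t^3)\bigl[3T_{\v t}^2-4\delta\Pi_{\v t}^{-1}\bigr]\diag(\v u/\v t^3)$ with $\v u=\Pi_{\v t}^{-1}\v 1$ (Lemma~\ref{lem:grad-hess}, matching your formula), and lands on the identical per-coordinate condition $\delta\le 3\eta_p t_j^2/(1+3t_j^2)$ --- but not via your $D_1^{-1}$-congruence. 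Instead it uses diagonal domination: $\Pi_{\v t}\succeq \eta_p I+\delta(T_{\v t}^{-2}-I)$ is a diagonal lower bound, so $\Pi_{\v t}^{-1}\preceq \diag\bigl(t_j^2/(\eta_p t_j^2+\delta(1-t_j^2))\bigr)$, and scalar algebra gives the sharp constant. So the ``naive bound only yields $3\eta_p\varepsilon^2/4$'' obstacle you identify is resolved in the paper by keeping the $\delta T_{\v t}^{-2}$ term inside the Loewner bound rather than by your congruence; the two manipulations are equivalent and equally sharp. For (i), the paper does not use your decomposition $2\diag(\v u)B\diag(\v u)-2P^\top\wt\Sigma_{\v t}^{-1}P$ (which I verified is correct, with $P=\diag(\v 1-B\v v)-T_{\v t}B\diag(\v u)$), nor the max-of-convex variational argument; it simply runs the mirror image of (ii) with $\Sigma\preceq\eta_1 I$, obtaining the per-coordinate slack $\delta-3(\eta_1-\delta)t_j^2\ge\delta>0$ whenever $\delta\ge\eta_1$. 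This yields strict definiteness of the bracket \emph{with uniform margin even at the threshold} $\delta=\eta_1$, which disposes of the endpoint concern you flag --- your $B\preceq 0$ argument gives only non-strict concavity at $\delta=\eta_1$ without additional work, since $B$ there has a kernel. What your route buys in exchange is validity on the entire closed cube $[0,1]^p$ (the paper's interior representation needs $t_j>0$ and the extension to the boundary is left implicit), plus an elegant one-line concavity proof via pointwise maxima. Finally, the degeneracy you flag --- points where some $u_j=(\Pi_{\v t}^{-1}\v 1)_j=0$ make the congruence factor singular, so the Hessian computation yields only semidefiniteness there --- is real and is silently glossed over by the paper as well; your plan to patch it (and strictness on faces) by a density or limiting argument is the right kind of repair and goes beyond what the paper writes.
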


Strict concavity of $f_\delta(\v t)$ establishes the equivalence between the target problem \eqref{eqn:opt-bc3} and the Boolean relaxation~\eqref{eqn:opt-br2} for all $\delta \geq \eta_1$. Since \eqref{eqn:opt-bc3} is equivalent to the original problem \eqref{eqn:opt-bss}, the following result immediately follows from Theorem~\ref{thm:delta-prperties}(i).
\begin{corollary}
\label{cor:concavity-equi}
There exists $\delta_c \leq \eta_1$ such that for all $\delta \geq \delta_c$, the Boolean relaxation~\eqref{eqn:opt-br2} problem is equivalent to the original sparse portfolio selection problem~\eqref{eqn:opt-bss}.
\end{corollary}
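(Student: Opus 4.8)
The plan is to set $\delta_c = \eta_1$ and to exploit the strict concavity guaranteed by Theorem~\ref{thm:delta-prperties}(i) together with the value-matching at corners from Theorem~\ref{thm:relaxation-properties}(ii). The guiding principle is elementary: a strictly concave function on a compact convex polytope attains its minimum only at the extreme points of that polytope. Once this is in place, every minimizer of the relaxation \eqref{eqn:opt-br2} is forced to be a vertex of $\cC_k$, where $f_\delta$ coincides with the target objective, and the equivalence follows.

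First, I would fix $\delta \geq \eta_1$, so that $f_\delta$ is strictly concave on $[0,1]^p$ by Theorem~\ref{thm:delta-prperties}(i), hence strictly concave on the compact convex set $\cC_k \subseteq [0,1]^p$. I would then record the standard fact that strict concavity forbids a minimizer from lying in the relative interior of a segment: if $\v t_0 = \lambda \v a + (1-\lambda)\v b$ with $\v a \neq \v b$ in $\cC_k$ and $\lambda \in (0,1)$, then $f_\delta(\v t_0) > \lambda f_\delta(\v a) + (1-\lambda) f_\delta(\v b) \geq \min\{f_\delta(\v a), f_\delta(\v b)\}$, so $\v t_0$ cannot be a minimizer. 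Therefore every minimizer of $f_\delta$ over $\cC_k$ is an extreme point of $\cC_k$.

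Second, I would characterize the extreme points of $\cC_k = \{\v t \in [0,1]^p : \v t^\top \v 1 \leq k\}$ and show they are exactly the binary vectors $\v s \in \{0,1\}^p$ with $\v 1^\top \v s \leq k$. A vertex activates $p$ linearly independent constraints among $\{t_j \geq 0\}$, $\{t_j \leq 1\}$, and $\{\v t^\top \v 1 \leq k\}$. If the sum constraint is inactive, all $p$ tight constraints are box constraints, which yields a binary vector; if it is active, then $p-1$ box constraints fix $p-1$ coordinates to $0$ or $1$, and since $k$ is an integer the remaining coordinate is forced to an integer lying in $[0,1]$, hence to $0$ or $1$. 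Thus every vertex is binary, and conversely each such binary point is a vertex of $\cC_k$.

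Combining the two steps, any minimizer of \eqref{eqn:opt-br2} is a binary $\v s$ with $\v 1^\top \v s \leq k$, and at such points Theorem~\ref{thm:relaxation-properties}(ii) gives $f_\delta(\v s) = -\v 1^\top \Sigma_{[\v s]}^{-1}\v 1$, the objective of the target problem \eqref{eqn:opt-bc3}. Since these binary points are precisely the feasible points of \eqref{eqn:opt-bc3} and the two objectives agree there, the optimal values and the sets of binary minimizers coincide; the equivalence of \eqref{eqn:opt-bc3} with the original problem \eqref{eqn:opt-bss} then delivers the claim with $\delta_c = \eta_1$. The main obstacle is not depth but bookkeeping: I must verify the two-way equivalence carefully---both that every relaxed optimum is an original optimum and vice versa---and the integrality-of-vertices argument is the one step that genuinely uses $k \in \{0,1,\dots,p\}$ being an integer rather than following from concavity alone.
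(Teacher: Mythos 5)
Your proof is correct and takes essentially the same route as the paper, which derives the corollary directly from Theorem~\ref{thm:delta-prperties}(i): strict concavity for $\delta \geq \eta_1$ forces every minimizer of \eqref{eqn:opt-br2} to a binary vertex of $\cC_k$, where Theorem~\ref{thm:relaxation-properties}(ii) matches the relaxed objective with the target problem \eqref{eqn:opt-bc3}, itself equivalent to \eqref{eqn:opt-bss}. Your write-up simply makes explicit the two steps the paper treats as immediate---the extreme-point characterization of $\cC_k$ (including the integrality of $k$) and the fact that a strictly concave function minimizes only at extreme points.
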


Finally, Theorem~\ref{thm:sol-continuity} establishes the continuity of the solution to \eqref{eqn:opt-br2} as a function of the auxiliary variable~$\delta$. This is a consequence of the well-known optimization result popularly known as {\em Berge's maximum theorem}; refer to, e.g., \cite{sundaram1996first}.

\begin{theorem}
    \label{thm:sol-continuity}
    For each $k$, suppose $h_k^*$ and $\cD^*_k$ are the optimal value and optimal solution of the target problem \eqref{eqn:opt-bc3}. Further, let  
    \[
    h_k(\delta) = \minimize_{\v t \in \cC_k} f_\delta(\v t)\quad 
    \text{and}
    \quad 
    \cD_{k,\delta} = \argmin_{\v t \in \cC_k} f_\delta(\v t).
    \]
    Then, for any $\epsilon \in (0, \eta_1)$, $h_k(\delta)$ is continuous and $\cD_{k, \delta}$ is compact-valued and upper hemicontinuous in $\delta \in [\epsilon, \eta_1]$: for any sequence $\delta_\ell \to \eta_1$ and $\v t^{(\ell)} \in \cD_{k,\delta}$ with $\v t^{(\ell)} \to \v t^*$, we have $\v t^* \in \cD^*_k$. 
\end{theorem}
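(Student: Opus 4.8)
The plan is to apply Berge's maximum theorem (in its minimization form) with the scalar parameter $\delta$ ranging over the compact interval $[\epsilon, \eta_1]$, the decision variable $\v t$ ranging over the feasible set $\cC_k$, and objective $f_\delta(\v t)$. Berge requires two hypotheses: (a) that the constraint correspondence $\delta \mapsto \cC_k$ is nonempty, compact-valued, and continuous; and (b) that $(\delta, \v t) \mapsto f_\delta(\v t)$ is jointly continuous on $[\epsilon, \eta_1] \times \cC_k$. Hypothesis (a) is immediate: since $\cC_k$ does not depend on $\delta$, the correspondence is constant, and $\cC_k$ is a closed bounded polytope in $\R^p$, hence a nonempty compact set; a constant compact-valued correspondence is trivially both upper and lower hemicontinuous.

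For hypothesis (b), I would first note that $\wt\Sigma_{\v t} = T_{\v t}\Sigma T_{\v t} + \delta(I - T_{\v t}^2)$ is a polynomial in the entries of $\v t$ and affine in $\delta$, hence jointly continuous as a matrix-valued map, and that $\wt\Sigma_{\v t}$ is positive definite for every $(\delta, \v t) \in (0,\infty) \times [0,1]^p$ (the same uniform invertibility that underlies the continuity assertion of Theorem~\ref{thm:relaxation-properties}(i)). Because matrix inversion is continuous on the open set of invertible matrices, $(\delta, \v t) \mapsto \wt\Sigma_{\v t}^{-1}$ is jointly continuous, and therefore so is $f_\delta(\v t) = -\v t^\top \wt\Sigma_{\v t}^{-1}\v t$. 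With (a) and (b) verified, Berge's theorem delivers at once that $h_k(\delta)$ is continuous on $[\epsilon, \eta_1]$ and that $\cD_{k,\delta}$ is nonempty, compact-valued, and upper hemicontinuous there.

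It then remains to convert upper hemicontinuity at the endpoint $\delta = \eta_1$ into the stated inclusion. Upper hemicontinuity at $\eta_1$ gives, for any $\delta_\ell \to \eta_1$ and $\v t^{(\ell)} \in \cD_{k,\delta_\ell}$ with $\v t^{(\ell)} \to \v t^*$, that $\v t^* \in \cD_{k,\eta_1}$; so the task reduces to identifying $\cD_{k,\eta_1}$ with $\cD^*_k$. For this I would invoke Theorem~\ref{thm:delta-prperties}(i): at $\delta = \eta_1$ the objective $f_{\eta_1}$ is strictly concave on $[0,1]^p$, so its minimum over the polytope $\cC_k$ is attained only at extreme points of $\cC_k$. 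Since $k$ is an integer, the hyperplane $\v 1^\top \v t = k$ meets the cube only at its binary vertices, so the extreme points of $\cC_k$ are exactly the binary vectors $\v s \in \{0,1\}^p$ with $|\v s| \leq k$; at such $\v s$, Theorem~\ref{thm:relaxation-properties}(ii) gives $f_{\eta_1}(\v s) = -\v 1^\top \Sigma_{[\v s]}^{-1}\v 1$, which is precisely the objective of the target problem \eqref{eqn:opt-bc3}. Hence $\min_{\v t\in\cC_k} f_{\eta_1}(\v t) = h_k^*$, every minimizer of $f_{\eta_1}$ over $\cC_k$ is a binary solution of \eqref{eqn:opt-bc3}, so $\cD_{k,\eta_1} \subseteq \cD^*_k$ (with equality following from Corollary~\ref{cor:concavity-equi}). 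In particular $\v t^* \in \cD^*_k$, as claimed.

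The technical prerequisite — the joint continuity in (b) — is routine once the uniform positive definiteness of $\wt\Sigma_{\v t}$ over the entire cube (corners included) is in hand, so I expect the genuine obstacle to be the final identification step rather than the invocation of Berge. The hard part will be arguing cleanly that the abstract limit $\v t^*$, which Berge only places in $\cD_{k,\eta_1}$, is in fact a \emph{binary} point solving the target problem: this hinges on chaining strict concavity (minimizers are extreme points), the integrality of $k$ (extreme points of $\cC_k$ are binary), and the corner-value agreement of Theorem~\ref{thm:relaxation-properties}(ii) (those binary values coincide with the target objective). I would take care to state each link precisely, since it is their combination, not any single one, that pins down $\v t^* \in \cD^*_k$.
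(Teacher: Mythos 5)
Your argument is correct and takes essentially the same route as the paper: both apply Berge's maximum theorem over the compact interval $[\epsilon,\eta_1]$ with the constant compact feasible set $\cC_k$ to obtain continuity of $h_k$ and upper hemicontinuity of $\cD_{k,\delta}$, and both then identify $\cD_{k,\eta_1}$ with $\cD^*_k$ via the strict concavity of $f_{\eta_1}$ (Theorem~\ref{thm:delta-prperties}(i)) combined with the corner-value agreement of Theorem~\ref{thm:relaxation-properties}(ii) --- indeed you verify Berge's joint-continuity hypothesis (via positive definiteness of $\wt\Sigma_{\v t}$ on all of $[0,1]^p$) and the extreme-point chain more explicitly than the paper's terse proof does. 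One small repair: your assertion that the hyperplane $\v 1^\top \v t = k$ meets the cube only at binary vertices is literally false (for $p=2$, $k=1$ it contains $(1/2,1/2)$); the fact you actually need, which is true, is that the \emph{extreme points} of the polytope $\cC_k$ are binary when $k$ is an integer, since a vertex must have $p$ linearly independent active constraints, forcing at least $p-1$ coordinates into $\{0,1\}$ and, when the constraint $\v 1^\top \v t = k$ is among the active ones, the remaining coordinate to equal an integer in $[0,1]$.
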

Since $\mathcal{D}_k^*$ constitutes an optimal solution to the original sparse portfolio problem,  Theorem~\ref{thm:sol-continuity} implies that by constructing a sequence of minimizers of  $f_\delta(\v t)$ along an increasing sequence of $\delta$ values approaching the largest eigenvalue $\eta_1$
of $\Sigma$, the limit of these minimizers will yield a solution to the original problem. This connection establishes a pathway to recover the sparse portfolio solution through a controlled tuning parameter $\delta$ converging to $\eta_1$, inspiring our algorithm below.

\subsection{Algorithm}
It is well-established that minimizing a strictly convex function over a convex set---such as the polytope $\cC_k$ or the simplex $\cS_k$---is a tractable convex optimization problem, guaranteeing convergence to a unique optimum and enabling $\epsilon$-approximate solutions via gradient-based methods in polynomial time~\citep{boyd2004convex}. In contrast, minimizing a strictly concave function over the same set is computationally hard, as all optima are confined to the vertices, making gradient descent highly sensitive to initialization and prone to suboptimal solutions. In our context, Theorem~\ref{thm:delta-prperties}(i) establishes that $f_\delta(\v t)$ is strictly concave on $\cC_k$ for any $\delta \geq \eta_1$, leading to an equivalence between the original sparse portfolio selection problem and its reformulation, but limiting the effectiveness of gradient-based methods due to the combinatorial landscape. Conversely, Theorem~\ref{thm:delta-prperties}(ii) shows that for sufficiently small positive $\delta$ values, $f_\delta(\v t)$ becomes strictly convex over a truncated hypercube $[\varepsilon, 1]^p$, for a given $\varepsilon > 0$, ensuring that gradient-based algorithms reliably converge to the unique optimal solution from any initialization within this region. Building on these properties, our approach begins with a small value of $\delta$ to obtain a stable initialization and gradually increases $\delta$ toward $\eta_1$, thereby transforming the objective from convex to concave and ultimately guiding the solution to the optimal point of the original binary-constrained problem.

Furthermore, Theorem~\ref{thm:relaxation-properties}(iv) establishes that the target optimization problem can be restricted to the simplex $\cS_k$ for each $k$. The conditional gradient method, also known as the {\em Frank--Wolfe} algorithm~\citep{frank1956algorithm, jaggi2013revisiting}, is particularly well-suited for optimization over simplices and enjoys strong convergence guarantees. Building on this, we introduce a modified variant, \texttt{Grid-FW}, which iteratively updates the vector $\v t$ while progressively increasing $\delta$ from a small initial value up to the largest eigenvalue $\eta_1$ of $\Sigma$. As mentioned earlier, this continuation strategy is motivated by Theorem~\ref{thm:sol-continuity}, which ensures that the optimal solution depends continuously on $\delta$. The full procedure is outlined in Algorithm~\ref{alg:alg}.
\begin{algorithm}
\caption{\texttt{Grid-FW}$(\Sigma, \alpha, \varepsilon, n, m)$}
\label{alg:alg}
\begin{algorithmic}[1]
\State Compute the largest and smallest eigenvalues $\eta_1$ and $\eta_p$ of $\Sigma$
\State Take $\delta_1 = 3\eta_p\varepsilon^2/(1 + 3\varepsilon^2)$ 
and $r = (\eta_1/\delta_1)^{1/(n-1)}$
\State Create a geometric grid $\{\delta_1, \delta_2, \dots, \delta_n \}$ where $\delta_\ell = \delta_1 r^{\ell-1}$, $\ell =1, \dots, n$
\State $\v t \leftarrow (k/p) \cdot \v 1$ and $\v s^* \leftarrow \v 0$
\For {$\ell = 1, 2, \dots, n$}
\For {$i = 1, \dots, m $}
\State Compute the gradient $\nabla f_{\delta_\ell}(\v t)$
\State Let $\v s \in \{0, 1\}^p$ with ones at the positions of the $k$ smallest components of $\nabla f_{\delta_{\ell}}(\v t)$
\State $\v t \leftarrow (1 - \alpha) \v t  + \alpha \v s$
\If {$\v 1^\top \Sigma_{[\v s^*]} \v 1 < \v 1^\top \Sigma_{[\v s]} \v 1 $ }
\State $\v s^* \leftarrow \v s$
\EndIf
\EndFor
\EndFor
\State \Return $\v s^*$, $\v s$ and $\v t$
\end{algorithmic}
\end{algorithm}

Algorithm~\ref{alg:alg}, \texttt{Grid-FW}$(\Sigma, \alpha, \varepsilon, n, m)$, executes for $n$ epochs and in each epoch, there are $m$ Frank-Wolfe steps. Using Theorem~\ref{thm:delta-prperties}, we select $\delta_1 = 3\eta_p\varepsilon^2/(1 + 3\varepsilon^2)$ for a small positive constant $\varepsilon \leq 0.1 (k/p)$, to make sure that the center $(k/p) \v 1$ of the simplex $\cS_k$ is within the truncated hypercube $[\varepsilon, 1]^p$. We create an increasing (geometric) sequence $\{\delta_1, \delta_2, \dots, \delta_n \}$ of $n$ values for the $\delta$ parameter with $\delta_\ell = \delta_1 r^{\ell-1}$ and $r = (\eta_1/\delta_1)^{1/n-1}$. Within each epoch $\ell$ (within the inner for-loop of Algorithm~\ref{alg:alg}), $\v t$ is iteratively updated $m$ times by solving  
\begin{align}
    &\v s \leftarrow \argmin_{\v u \in \cS_k} \v u^\top \nabla f_{\delta_\ell}(\v t), \quad \text{and then,}\quad  \v t \leftarrow (1 - \alpha) \v t  + \alpha \v s,\label{eqn:s-update}
\end{align}
where 
$\alpha \in (0, 1)$ is a fixed constant. Note that in our problem, it is easy to solve the minimization in~\eqref{eqn:s-update} because its solution $\v s$ must be a binary vector with ones correspond to the $k$ smallest values of the gradient $\nabla f_{\delta_\ell}(\v t)$; see line 8 of the algorithm. 

At $\delta_n = \eta_1$, the function $f_{\delta_n}(\v t)$ becomes strictly concave, so it attains its minimum at a corner of $\cS_k$, which is the {\em final} $\v s$ returned by line 15. For some datasets, we may slightly increase the performance by considering the {\em best} model $\v s^*$ among all models visited by the algorithm (see lines 10-12). Table~\ref{tab:illustration} presents the results for both the final model $\v s$ and the best model $\v s^*$ for two real-world datasets. Note that the final and the best models in each case achieve optimal or nearly optimal variance (just one false positive\footnote{Here, a false positive refers to our method selecting a suboptimal asset.} in two cases). We obtain the optimal models using the Big-M formulation in \texttt{CPLEX} (here $p$ and $k$ are sufficiently small for \texttt{CPLEX}  to confirm an optimal solution within a reasonable amount of time). 
Figure~\ref{fig:surface-plots}(b) shows the convergence of $\v t^{(\ell)}$ iteratively to a binary point for the dataset with $p=31$ when $k =4$. We see that $\v t^{(\ell)}$ converges much before the final epoch $n = 500$ as indicated by Corollary~\ref{cor:concavity-equi}. To take advantage of this, in our implementation of the algorithm, to reduce the running time, we also have an additional termination condition (not stated in the algorithm) that stops the algorithm when $\v t$ converges close to a corner point.

\begin{table}[h]
    \caption{Illustration of the {accuracy} of our method compared to optimal models on two datasets containing, respectively,  31 and  85 assets from \cite{chang2000heuristics} for $k$ up to 10. Here, {\bf FP} denotes the number of false positives and {\bf \% Err} denotes the percentage of error from our model compared to the optimal variance (i.e., 100 times the ratio of error to the optimal variance).
    \label{tab:illustration}}
    \centering
    \footnotesize
\begin{tabular}{r l c l c l c  l c l c l}
  \toprule
   \multicolumn{1}{c}{}
   & \multicolumn{5}{c}{{\bf Dataset with }$\v{p= 31}$} 
   & \multicolumn{1}{c}{}
    & \multicolumn{5}{c}{{\bf Dataset with }$\v{p=85}$} \\
  \cmidrule{2-6} \cmidrule{8-12}
    \multicolumn{2}{c}{}
   & \multicolumn{2}{c}{{\bf Final Model}} 
   & \multicolumn{2}{c}{{\bf Best Model}} 
    & \multicolumn{2}{c}{}
   & \multicolumn{2}{c}{{\bf Final Model}} 
   & \multicolumn{2}{c}{{\bf Best Model}} \\
  \cmidrule{3-4} \cmidrule{5-6}
   \cmidrule{9-10} \cmidrule{11-12}
  \textbf{$k$} 
    & \multicolumn{1}{l}{%
      \begin{tabular}{@{}c@{}}
        {\bf Optimal}\\
        {\bf Variance}
      \end{tabular}%
    }
    & \textbf{FP} & \textbf{\% Err}
    & \textbf{FP} & \textbf{\% Err}
    & \quad
    & \multicolumn{1}{l}{%
      \begin{tabular}{@{}c@{}}
        {\bf Optimal}\\
        {\bf Variance}
      \end{tabular}%
    }
    & \textbf{FP} & \textbf{\% Err}
    & \textbf{FP} & \textbf{\% Err}
  \\
  \midrule
  1  & 1.29$\times 10^{-3}$  & 0 & $0.0$  & 0 & $0.0$  
     &  & 4.50$\times 10^{-4}$  & 0 & $0.0$  & 0 & $0.0$  \\
  2  & 7.99$\times 10^{-4}$  & 0 & $0.0$   & 0 & $0.0$   
     &  & 2.74$\times 10^{-4}$  & 0 & $0.0$   & 0 & $0.0$   \\
  3  & 7.15$\times 10^{-4}$  & 0 & $0.0$   & 0 & $0.0$   
     &  & 2.19$\times 10^{-4}$  & 0 & $0.0$   & 0 & $0.0$   \\
  4  & 6.75$\times 10^{-4}$  & 0 & $0.0$   & 0 & $0.0$   
     &  & 1.97$\times 10^{-4}$  & 1 & $0.79$   & 1 & $0.79$   \\
  5  & 6.50$\times 10^{-4}$  & 0 & $0.0$   & 0 & $0.0$   
     &  & 1.84$\times 10^{-4}$  & 0 & $0.0$   & 0 & $0.0$   \\
  6  & 6.22$\times 10^{-4}$  & 0 & $0.0$   & 0 & $0.0$   
     &  & 1.72$\times 10^{-4}$  & 0 & $0.0$   & 0 & $0.0$   \\
  7  & 6.01$\times 10^{-4}$  & 0 & $0.0$   & 0 & $0.0$   
     &  & 1.64$\times 10^{-4}$  & 0 & $0.0$   & 0 & $0.0$   \\
  8  & 5.83$\times 10^{-4}$  & 1 & $0.26$  & 1 & $0.26$  
     &  & 1.56$\times 10^{-4}$  & 0 & $0.0$  & 0 & $0.0$  \\
  9  & 5.66$\times 10^{-4}$  & 0 & $0.0$   & 0 & $0.0$   
     &  & 1.52$\times 10^{-4}$  & 0 & $0.0$   & 0 & $0.0$   \\
  10 & 5.56$\times 10^{-4}$  & 0 & $0.0$   & 0 & $0.0$   
     & & 1.47$\times 10^{-4}$  & 0 & $0.0$   & 0 & $0.0$   \\
  \bottomrule
\end{tabular}
\end{table}

\begin{remark}(Scaling)
\normalfont
In real-world datasets, the eigenvalues of the covariance matrix $\Sigma$ can be quite small and span a wide range, as those for real datasets in  Table~\ref{tab:illustration} where, for $p=85$, the largest and smallest eigenvalues are about $0.024$ and $8.18 \times 10^{-5}$, respectively, and for $p=31$, they are $0.036$ and $2.26\times 10^{-4}$. Such small values can cause numerical instability when choosing~$\delta$. To address this, we modify the constraint $\v 1^\top \v \beta = 1$ and replace $\Sigma$ in \eqref{eqn:opt-bss} with its correlation matrix prior to Boolean relaxation. We incorporate this scaling in our algorithm; see Appendix~\ref{app:scaling} for details. Accordingly, $\eta_1$ and $\eta_p$ in Theorems~\ref{thm:delta-prperties} and \ref{thm:sol-continuity}, as well as in Algorithm~\ref{alg:alg}, are rescaled by $\min_j \Sigma_{j,j}$ and $\max_j \Sigma_{j,j}$. For instance, with $p=85$, the variances range from $4.49 \times 10^{-4}$ to $4.62 \times 10^{-3}$, so this scaling ensures the $\delta$-grid is numerically stable.
\end{remark}

\begin{remark}(Running time)
    \label{rem:running-time}
    \normalfont
    In Algorithm~\ref{alg:alg}, we perform $nm$ iterations; each iteration requires computing the gradient $\nabla f_\delta(\v t)$ at the current point $\v t$. From Lemma~\ref{lem:grad-hess} in Appendix~\ref{app:proofs}, the main cost in each gradient computation stems from solving a linear system $\Pi^{-1} \v 1$ for a positive definite matrix~$\Pi$. We do this via conjugate gradient, which takes $O(p^2)$ time within the required accuracy; refer to \cite{Golub1996} or \cite{saad2003iterative} for details on the complexity of conjugate gradient methods. Consequently, the total runtime of our method is $O(nmp^2)$.
\end{remark}

%
\section{Applications}
\label{sec:sims}
We benchmark our method \texttt{Grid-FW}, Algorithm~\ref{alg:alg}, against the Big-M formulation implemented using \texttt{CPLEX} for three different examples involving real as well as simulated datasets. We set $M=1$ for the Big-M formulation, following \cite{bertsimas2022scalable}. In Algorithm \ref{alg:alg}, we set $\alpha=0.05$, $\varepsilon = \min(0.1k/p,\, 0.001)$, $n=500$, and $m=10$. We compare the methods for fixed computational budgets and $k =  \lfloor z\% p \rfloor$, $z =10, 25, 50$. All experiments were run on a MacBook Pro with an Apple M4 Pro chip and 24 GB of memory, forcing single-threaded execution for numerical libraries (i.e., no parallel computations). Note that, because \texttt{CPLEX} is implemented in highly optimized low-level languages, our Python-based implementation is inherently at a disadvantage. Datasets, Python code for our method, and Julia code for \texttt{CPLEX} are provided in \url{https://github.com/saratmoka/grid-fw}.

In Table~\ref{tab:real_data_results}, the first computational budget $B$ (in seconds) is set equal to the time our method takes to solve each problem. To see if \texttt{CPLEX} can improve the solutions with more time, we also consider 60 and 300 seconds for \texttt{CPLEX}, where 300 seconds is the time limit imposed on \texttt{CPLEX} Big-M method by \cite{bertsimas2022scalable} for solving a similar problem. In Example~3, which involves a larger $p$, \texttt{CPLEX} is given 600 seconds. Across all examples, we compare the variance of the resulting portfolios under each computational budget. 

\subsection{Example 1: Three portfolio datasets}
For our first example, we use three of the five portfolio problems from \cite{chang2000heuristics}. Some results for the remaining two smallest datasets, with $p=31, 85$, are already illustrated in Table \ref{tab:illustration}. Here, we present results for the remaining three datasets with $p=89, 98, 225$.

\subsection{Example 2: S\&P 500}
Our second example involves a dataset from \cite{Asimit2025Risk} containing $6037$ daily returns of $p=441$ assets from firms that remained continuously listed in the S\&P 500 during the period January 2000 to December 2023. We split the dataset into three periods: 2000-2007, 2008-2015, and 2016-2023. Shrinkage is essential when estimating the covariance matrix for portfolio optimization \citep{Ledoit2004honey}, and we use the analytical non-linear shrinkage estimator in \cite{ledoit2020analytical}, which guarantees positive definiteness (recall Assumption \ref{ass:main}).

\subsection{Example 3: Assets with a low-rank plus noise structure}
Our final example consists of simulated data to compare the methods for very large values of $p$. A common approach (see e.g. \citealp{ross1976arbitrage}) to model the covariance matrix $\v \Sigma$ of the asset return vector $\v x_\tau$ is to assume the returns are driven by a linear combination $\Lambda \v \xi_\tau$ of uncorrelated lower dimensional factors $\v \xi_\tau = (\xi_{\tau,1}, \dots \xi_{\tau, q})^\top \in \mathbb{R}^q$, where $ \Lambda \in \mathbb{R}^{p \times q}$ are the factor loadings. We assume that
\[
\v x_\tau = \Lambda \v \xi_\tau + \v e, \,\, \v \xi_\tau \sim \mathcal{N}(\v 0, \mathrm{diag}(\nu^2_1, \dots, \nu^2_q)), \,\, \v e \sim \mathcal{N}(\v 0, d^2 I),\,\, \nu_1, \dots, \nu_q, d > 0.
\]
The resulting covariance is $\Sigma = \v \Lambda \mathrm{diag}(\nu_1, \dots, \nu_q)\v \Lambda^\top + d^2 I$. We divide the assets into two equally large groups, i.e.,\ $p/2$ assets in each. For the second group, the factor loadings for half of the factors (i.e.,\ $q/2$), are set to zero.  Moreover, the factors in the separate groups are normalized so that the relative variance of the portfolio based on assets from group 1 is roughly $10$ times larger than that of group 2. We set $q=1\%p$ and $d^2 = 0.05$, and $\nu^2_i$ simulated uniformly in  $[0.02^2, 0.05^2]$. 

\subsection{Results}

Table~\ref{tab:real_data_results} summarizes the results. Our method finished in at most 560 seconds when $p = 3000$, and it was often orders of magnitude faster across all examples. \texttt{CPLEX} timed out (over one day) on most problems, making it impossible to verify optimality. For small-scale problems, our variance reduction over \texttt{CPLEX} with equal run time was modest with only a few percentage points. When \texttt{CPLEX} is allowed more time for small-scales examples, it often finds a better optimum than our method. By contrast, for large-scale problems, our method delivered significantly lower variance than \texttt{CPLEX} with the same budget, and even extra time for \texttt{CPLEX} did not guarantee closing the gap.

\renewcommand{\arraystretch}{1.2}
\begin{table}[ht]
    \caption{Results for the three examples in Section \ref{sec:sims}. \textbf{Grid-FW} denotes our method, and the corresponding columns contain the time $B$ to run our method with $n = 500$ and $m = 10$. For \textbf{CPLEX Big-M}, \textbf{RV}$_B$ denotes the relative variance difference to \textbf{Grid-FW} obtained after $B$ seconds. \textbf{RV}$_\tau$ denotes the same quantity obtained after $\tau$ seconds. \textbf{RV}$_{\tau}$ is NA whenever $B>\tau$.  A positive value indicates a larger optimal variance (worse solution) for \texttt{CPLEX}. The $^*$ symbol denotes a difference of exactly zero. 
    \label{tab:real_data_results}}
\centering
\footnotesize
\begin{tabular}{lrrrrcrrr}
\toprule
                                       &                         &                         & \multicolumn{2}{c}{\textbf{Grid-FW}}                                     & \multicolumn{1}{l}{} & \multicolumn{3}{c}{\textbf{CPLEX Big-M}}                                                                                \\ \cline{4-5} \cline{7-9} 
\multicolumn{1}{c}{\textbf{Example 1}} & \multicolumn{1}{c}{$p$} & \multicolumn{1}{c}{$k$} & \multicolumn{1}{c}{\textbf{Time $B$}} & \multicolumn{1}{c}{\textbf{Var}} &                      & \multicolumn{1}{c}{\textbf{RV$_B$}} & \multicolumn{1}{c}{\textbf{RV$_{60}$}}  & \multicolumn{1}{c}{\textbf{RV$_{300}$}} \\ \cline{1-5} \cline{7-9} 
{\ul }                                 & $89$                    & $8$                     & 0.5                                   & 2.11$\times 10^{-4}$             &                      & 0\%$^*$             & 0\%$^*$                 & 0\%$^*$                 \\
                                       &                         & $22$                    & 0.4                                   & 1.61$\times 10^{-4}$             &                      & 0.4\%\phantom{$^\star$}                             & -0.1\%\phantom{$^\star$}                                & -0.1\%\phantom{$^\star$}                                \\
                                       &                         & $44$                    & 0.4                                   & 1.39$\times 10^{-4}$             &                      & 1.34\%\phantom{$^\star$}                            & 0\%$^*$                 & 0\%$^*$                 \\
                                       & $98$                    & $9$                     & 0.7                                   & 1.30$\times 10^{-4}$             &                      & 4.6\%\phantom{$^\star$}                             & 0\%$^*$                 & 0\%$^*$                 \\
                                       &                         & $24$                    & 0.6                                   & 9.52$\times 10^{-5}$             &                      & 2.5\%\phantom{$^\star$}                             & 0\%$^*$                 & 0\%$^*$                 \\
                                       &                         & $49$                    & 0.6                                   & 8.18$\times 10^{-5}$             &                      & 0.6\%\phantom{$^\star$}                             & -0.1\%\phantom{$^\star$}                                & -0.1\%\phantom{$^\star$}                                \\
                                       & $225$                   & $22$                    & 1.5                                   & 1.42$\times 10^{-4}$             &                      & 6.64\%\phantom{$^\star$}                            & -2.7\%\phantom{$^\star$}                                & -2.5\%\phantom{$^\star$}                                \\
                                       &                         & $56$                    & 1.6                                   & 7.81$\times 10^{-5}$             &                      & 31.3\%\phantom{$^\star$}                            & -0.7\%\phantom{$^\star$}                                & -1.7\%\phantom{$^\star$}                                \\
                                       &                         & $112$                   & 1.5                                   & 4.54$\times 10^{-5}$             &                      & 37.7\%\phantom{$^\star$}                            & 3.1\%\phantom{$^\star$}                                 & 2.6\%\phantom{$^\star$}                                 \\ \hline
\multicolumn{1}{c}{\textbf{Example 2}} & \multicolumn{1}{c}{$p$} & \multicolumn{1}{c}{$k$} & \multicolumn{1}{c}{\textbf{Time $B$}} & \multicolumn{1}{c}{\textbf{Var}} &                      & \multicolumn{1}{c}{\textbf{RV$_B$}} & \multicolumn{1}{c}{\textbf{RV$_{60}$}}  & \multicolumn{1}{c}{\textbf{RV$_{300}$}} \\ \cline{1-5} \cline{7-9} 
2000-2007                              & $441$                   & $44$                    & 3.9                                   & 2.95$\times 10^{-5}$             &                      & 21.1\%\phantom{$^\star$}                            & 2.6\%\phantom{$^\star$}                                 & 0.6\%\phantom{$^\star$}                                 \\
                                       &                         & $110$                   & 3.8                                   & 2.31$\times 10^{-5}$             &                      & 19.5\%\phantom{$^\star$}                            & 3.7\%\phantom{$^\star$}                                 & 1.8\%\phantom{$^\star$}                                 \\
                                       &                         & $220$                   & 3.8                                   & 2.00$\times 10^{-5}$             &                      & 32.3\%\phantom{$^\star$}                            & 6.9\%\phantom{$^\star$}                                 & 0.9\%\phantom{$^\star$}                                 \\
2008-2015                              & $441$                   & $44$                    & 3.7                                   & 3.52$\times 10^{-5}$             &                      & 19.5\%\phantom{$^\star$}                            & 3.7\%\phantom{$^\star$}                                 & 1.8\%\phantom{$^\star$}                                 \\
                                       &                         & $110$                   & 3.6                                   & 2.76$\times 10^{-5}$             &                      & 15.7\%\phantom{$^\star$}                            & 4.7\%\phantom{$^\star$}                                 & 1.7\%\phantom{$^\star$}                                 \\
                                       &                         & $220$                   & 3.6                                   & 2.43$\times 10^{-5}$             &                      & 39.8\%\phantom{$^\star$}                            & 6.2\%\phantom{$^\star$}                                 & 0.7\%\phantom{$^\star$}                                 \\
2016-2023                              & $441$                   & $44$                    & 3.4                                   & 4.27$\times 10^{-5}$             &                      & 24.1\%\phantom{$^\star$}                            & 5.5\%\phantom{$^\star$}                                 & 1.0\%\phantom{$^\star$}                                 \\
                                       &                         & $110$                   & 3.4                                   & 3.42$\times 10^{-5}$             &                      & 23.8\%\phantom{$^\star$}                            & 4.9\%\phantom{$^\star$}                                 & 2.0\%\phantom{$^\star$}                                 \\
                                       &                         & $220$                   & 3.4                                   & 2.99$\times 10^{-5}$             &                      & 35.6\%\phantom{$^\star$}                            & 4.8\%\phantom{$^\star$}                                 & 0.7\%\phantom{$^\star$}                                 \\ \hline
\multicolumn{1}{c}{\textbf{Example 3}} & \multicolumn{1}{c}{$p$} & \multicolumn{1}{c}{$k$} & \multicolumn{1}{c}{\textbf{Time $B$}} & \multicolumn{1}{c}{\textbf{Var}} &                      & \multicolumn{1}{c}{\textbf{RV$_B$}} & \multicolumn{1}{c}{\textbf{RV$_{300}$}} & \multicolumn{1}{c}{\textbf{RV$_{600}$}} \\ \cline{1-5} \cline{7-9} 
                                       & $1000$                  & $100$                   & 13.6                                  & 5.01$\times 10^{-4}$             &                      & 61.3\%\phantom{$^\star$}                            & -0.1\%\phantom{$^\star$}                                  & -0.1\%\phantom{$^\star$}                                \\
                                       &                         & $250$                   & 16.0                                  & 2.00$\times 10^{-4}$             &                      & 304.3\%\phantom{$^\star$}                           & 0.0\%\phantom{$^\star$}                                 & -0.0\%\phantom{$^\star$}                                \\
                                       &                         & $500$                   & 20.6                                  & 1.00$\times 10^{-4}$             &                      & 708.8\%\phantom{$^\star$}                           & 0.0\%\phantom{$^\star$}                                 & -0.0\%\phantom{$^\star$}                                \\
                                       & $2000$                  & $200$                   & 155.0                                 & 2.51$\times 10^{-4}$             &                      & 47.4\%\phantom{$^\star$}                            & 47.4\%\phantom{$^\star$}                                & 0.9\%\phantom{$^\star$}                                 \\
                                       &                         & $500$                   & 168.2                                 & 1.00$\times 10^{-4}$             &                      & 269.6\%\phantom{$^\star$}                           & 30.4\%\phantom{$^\star$}                                & 34.4\%\phantom{$^\star$}                                \\
                                       &                         & $1000$                  & 107.1                                 & 5.00$\times 10^{-5}$             &                      & 7.6\%\phantom{$^\star$}                             & 7.6\%\phantom{$^\star$}                                 & 1.6\%\phantom{$^\star$}                                 \\
                                       & $3000$                  & $300$                   & 558.7                                 & 1.68$\times 10^{-4}$             &                      & 28.1\%\phantom{$^\star$}                            & NA\,\,\,                                      & 18.2\%\phantom{$^\star$}                                \\
                                       &                         & $750$                   & 548.7                                 & 6.67$\times 10^{-5}$             &                      & 61.5\%\phantom{$^\star$}                            & NA\,\,\,                                      & 61.5\%\phantom{$^\star$}                                \\
                                       &                         & $1500$                  & 376.5                                 & 3.33$\times 10^{-5}$             &                      & 6.9\%\phantom{$^\star$}                            & NA\,\,\,                                      & 6.9\%\phantom{$^\star$} \\ 
\bottomrule
\end{tabular}
\end{table}
\section{Conclusion and future research}
\label{sec:conc}
We proposed a novel framework for sparse portfolio selection that employs Boolean relaxation to avoid intractable combinatorial computations. We devised an auxiliary objective function with a tuning parameter that transmutes the function from convex to concave. We proved that this transition helps in converging to the optimum of the original problem. Our method---suitable with or without ridge regularization---offers a practical and scalable alternative to the Big-M formulation in \texttt{CPLEX} for sparse portfolio selection.

Future research will consider more general objectives, such as the mean-variance framework, and additional constraints such as no short-selling or minimum and maximum investment limits. We believe such extensions can be competitive alternatives to \cite{bertsimas2022scalable}. 





\appendix
\section{Scaling}
\label{app:scaling}
To explain our scaling approach, let $R$ be the correlation matrix obtained from the covaraince matrix $\Sigma$, that is, 
\(
R_{i, j} = \Sigma_{i, j}/\sqrt{\Sigma_{i, i} \Sigma_{j, j}}, \)
\(
i, j \in \{1, \dots, p\}.
\)
Let $\v w = (1/\sqrt{\Sigma_{1, 1}}, \dots, 1/\sqrt{\Sigma_{p, p}})$. 
Then, the binary constrained problem \eqref{eqn:opt-bc3} can be rewritten as
\begin{align}
    \minimize_{\v s \in \{0, 1\}^p} - \v w_{[\v s]}^\top  R_{[\v s]}^{-1} \v w_{[\v s]}, \quad \subjectto |\v s| \leq k.
    \label{eqn:opt-bc3-mod}
\end{align}
With
\(
\wt R_{\v t} = T_{\v t} R T_{\v t} + \delta (I - T_{\v t}^2)
\)
and
\(
\wh f_{\delta}(\v t) = - (\v w \odot \v t)^\top \wt R_{\v t}^{-1}
(\v w \odot \v t),
\)
$\v t \in [0, 1]^p$,
we can consider a Boolean relaxation of \eqref{eqn:opt-bc3-mod} as
\begin{align*}
    \minimize_{\v t \in \cC_k} \wh f_{\delta}(\v t),
\end{align*}
where $\cC_k$ is the polytope defined by \eqref{eqn:defn-polytope}.
In particular, at the interior points $\v t \in (0,1)^p$, we can simplify the objective function to be
\[
\wh f_{\delta}(\v t) = - \v 1^\top \wh \Pi_{\v t}^{-1}
\v 1, \quad \text{where}\,\,\,\, 
\wh \Pi_{\v t} = \Sigma + \delta D_{\v t} \diag(\v v),
\]
with $\v v = \v 1/(\v w\odot \v w)$, which is the diagonal of $\Sigma$ (i.e., variances of the assets). See Lemma~\ref{lem:grad-hess} in Appendix~\ref{app:proofs} for expressions of the gradient 
and the Hessian of $\wh f_{\delta}(\v t)$ at each $\v t \in (0,1)^p$. 
Furthermore, the proof of Theorem~\ref{thm:delta-prperties} in Appendix~\ref{app:proofs} establishes that 
\begin{itemize}
    \item[(i)] $\wh f_\delta(\v t)$ strictly concave over $[0,1]^p$ for $\delta \geq \eta_1/\min_j v_j$, and 
    \item[(ii)] for every $\varepsilon \in (0, 1)$, it is strictly convex over $[\varepsilon, 1]^p$ for $\delta  \leq \frac{\eta_p}{\max_j v_j}\lt(\frac{3  \varepsilon^2}{1 + 3  \varepsilon^2}\rt)$.
\end{itemize}
Here, recall that $\eta_1$ and $\eta_p$ are the largest and smallest eigenvalues of $\Sigma$, respectively. 


\section{Proofs}
\label{app:proofs}

\begin{proof}[Proof of Lemma~\ref{lem:key-res2}]
The problem is convex and has a unique minimum. Using the method of Lagrange multipliers gives the result.
\end{proof}

\begin{proof}[Proof of Theorem~\ref{thm:monotonicity}]
It is well-known that every principal submatrix of a symmetric positive definite matrix is also positive definite \citep{Golub1996}. Since $\Sigma$ is symmetric positive definite, this implies that $\Sigma_{[\v s]}$ is also positive definite for every binary vector $\v s \neq \v 0$, and hence 
    \begin{align*}
            h(k) \coloneqq  \maximize_{\v s \in \{0, 1\}^p, \, \, \v 1^\top \v s = k} \v 1^\top \Sigma_{[\v s]}^{-1} \v 1.
    \end{align*}
is well-defined for all $k = 1, \dots, p$.

Now fix $k$ and consider $\v s \in \{0, 1\}^p$ such that $\v 1^\top \v s = k$. Consider $\v s'$ with $\v 1^\top \v s' = k+1$ such that the Hamming distance between $\v s$ and $\v s'$ is $1$, that is, there is a unique $j \in \{1, \dots, p\}$ such that $s_i = s'_i$ for all $i \neq j$ and $0 = s_j < s_j' =1$. Then, the proof is complete if we show that
\begin{align}
    \v 1^\top  \Sigma_{[\v s]}^{-1} \v 1 \leq \v 1^\top  \Sigma_{[\v s']}^{-1} \v 1,
\label{eqn:monotonicity}
\end{align}
because, under \eqref{eqn:monotonicity},
$\v 1^\top  \Sigma_{[\v s]}^{-1} \v 1 \leq h(k+1)$ and hence $h(k) \leq h(k+1)$ by maximizing the left hand size over all $\v s$ with $|\v s| = k$.

We prove \eqref{eqn:monotonicity} using the popular Banachiewicz inversion lemma \citep{YY05}. 
Without loss of generality we assume that such $\v s$ and $\v s'$ differ at the index $p$. 
Then, we write 
\[
\Sigma_{[\v s']} 
= 
\begin{bmatrix}
    \Sigma_{[\v s]} & & \v a \\
    & & \\
    \v a^\top & & b 
\end{bmatrix},
\]
for some appropriate vector $\v a$ and a scalar $b$. Note that both $\Sigma_{[\v s']}$ and $\Sigma_{[\v s]}$ are invertible and $b > 0$.  
Therefore, the Schur complement $c = b - \v a^\top \Sigma_{[\v s]}^\top \v a$ must be non-zero; see \cite{YY05}. 
Then,
\begin{align*}
    \Sigma_{[\v s']}^{-1} 
    =
    \begin{bmatrix}
        I & & - \frac{1}{c} \Sigma_{[\v s]}^{-1} \v a \\
        & & \\
        \v 0^\top & &\frac{1}{c}
    \end{bmatrix}
    \begin{bmatrix}
         \Sigma_{[\v s]}^{-1} & & \v 0 \\
         & & \\
        - \v a^\top  \Sigma_{[\v s]}^{-1} & & 1
    \end{bmatrix}
    =
    \begin{bmatrix}
         \Sigma_{[\v s]}^{-1} + \frac{1}{c} \Sigma_{[\v s]}^{-1} \v a \v a^\top  \Sigma_{[\v s]}^{-1} & & - \frac{1}{c} \Sigma_{[\v s]}^{-1} \v a \\
         & & \\
        - \frac{1}{c} \v a^\top  \Sigma_{[\v s]}^{-1} & & \frac{1}{c}
    \end{bmatrix}.
\end{align*}
Note that for any matrix $A$, $\v 1^\top A \v 1$ is simply the sum of all the elements of $A$. Similarly, for any vector $\v u$, $\v 1^\top \v u$ is the sum of all the elements of $\v u$. Thus, 
\begin{align*}
    \v 1^\top \Sigma_{[\v s']}^{-1} \v 1
    &= \v 1^\top \Sigma_{[\v s]}^{-1} \v 1 + \frac{1}{c} (\v 1^\top \Sigma_{[\v s]}^{-1} \v a)^2 - 2 \frac{1}{c}\v 1^\top \Sigma_{[\v s]}^{-1} \v a + \frac{1}{c} \\
    &= \v 1^\top \Sigma_{[\v s]}^{-1} \v 1 + \frac{1}{c} \lt(\v 1^\top \Sigma_{[\v s]}^{-1} \v a - 1\rt)^2.
\end{align*}
Therefore, to establish \eqref{eqn:monotonicity}, we just need to show that $c > 0$. For this, using simple block-matrix multiplication, observe that 
\begin{align*}
    \begin{bmatrix}
         I & & \v 0 \\
         & & \\
        - \v a^\top  \Sigma_{[\v s]}^{-1} & & 1
    \end{bmatrix}
    \begin{bmatrix}
        \Sigma_{[\v s]} & & \v a \\
        & & \\
        \v a^\top & & b \\
    \end{bmatrix} 
    \begin{bmatrix}
         I & & - \Sigma_{[\v s]}^{-1} \v a \\
         & & \\
         \v 0 & & 1
    \end{bmatrix}
    =     
    \begin{bmatrix}
         \Sigma_{[\v s]} & & \v 0 \\
         & & \\
         \v 0^\top & & c
    \end{bmatrix}.
\end{align*}
Since $\Sigma_{[\v s']}$ is positive definite, for any scalar $v \neq 0$ with $\v u = [\v 0^\top, v]^\top$, using the left hand side expression of the above equality,
\begin{align*}
    \v u^\top \begin{bmatrix}
         I &  \v 0 \\
         &  \\
        - \v a^\top  \Sigma_{[\v s]}^{-1} &  1
    \end{bmatrix}
    \begin{bmatrix}
        \Sigma_{[\v s]} & & \v a \\
        & & \\
        \v a^\top & & b \\
    \end{bmatrix} 
    \begin{bmatrix}
         I &  - \Sigma_{[\v s]}^{-1} \v a \\
         &  \\
         \v 0 &  1
    \end{bmatrix}
    \v u
    =    
\begin{bmatrix}
        - v \v a^\top  \Sigma_{[\v s]}^{-1} &  v
    \end{bmatrix}
        \Sigma_{[\v s']}
    \begin{bmatrix}
      - v \Sigma_{[\v s]}^{-1} \v a \\
         \\
         v 
    \end{bmatrix}
    > 0,
\end{align*}
and hence,
\[
\v u^\top     
\begin{bmatrix}
     \Sigma_{[\v s]} & & \v 0 \\
     & & \\
     \v 0^\top & & c
\end{bmatrix}
\v u = c v^2 > 0,
\]
which is possible only if $c > 0$.
\end{proof}

We need the following lemma in proving Theorem~\ref{thm:relaxation-properties}(iii).
\begin{lemma}
    \label{lem:simplified-f}
For every interior point $\v t \in (0, 1)^p$, the objective function $f_\delta(\v t)$ can be expressed as 
    \begin{align*}
    f_\delta(\v t) = - \v 1^\top \Pi_{\v t}^{-1} \v 1,\quad \text{with}\,\,\,\,
    \Pi_{\v t} = \Sigma + \delta D_{\v t} \,\,\, \text{and}\,\,\, D_{\v t} = T_{\v t}^{-2} - I.
    \end{align*}
\end{lemma}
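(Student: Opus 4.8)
The plan is to reduce $f_\delta(\v t) = -\v t^\top \wt\Sigma_{\v t}^{-1}\v t$ to the claimed form through a single diagonal congruence factorization of $\wt\Sigma_{\v t}$, which is available precisely because $\v t$ is an interior point. First I would observe that at an interior point every coordinate satisfies $t_j \in (0,1)$, so $T_{\v t} = \diag(\v t)$ is invertible, with $T_{\v t}^{-1} = \diag(1/t_1,\dots,1/t_p)$; this is the one place where the interior restriction is genuinely used.

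Next I would factor $T_{\v t}$ out of both terms of the definition $\wt\Sigma_{\v t} = T_{\v t}\Sigma T_{\v t} + \delta(I-T_{\v t}^2)$. Because diagonal matrices commute, one has $T_{\v t}^{-1}(I-T_{\v t}^2)T_{\v t}^{-1} = T_{\v t}^{-2}-I = D_{\v t}$, which yields the clean factorization
\[
\wt\Sigma_{\v t} = T_{\v t}\big(\Sigma + \delta D_{\v t}\big)T_{\v t} = T_{\v t}\,\Pi_{\v t}\,T_{\v t}.
\]
I would then record that $\Pi_{\v t}$ is invertible: each diagonal entry of $D_{\v t}$ equals $t_j^{-2}-1>0$, so $D_{\v t}\succ 0$, and combined with $\Sigma\succ 0$ (Assumption~\ref{ass:main}) and $\delta>0$ this makes $\Pi_{\v t}=\Sigma+\delta D_{\v t}$ positive definite, hence invertible. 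Taking inverses in the factorization gives $\wt\Sigma_{\v t}^{-1} = T_{\v t}^{-1}\Pi_{\v t}^{-1}T_{\v t}^{-1}$.

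Finally I would substitute this into $f_\delta$ and use the elementary identity $T_{\v t}^{-1}\v t = \v 1$ (equivalently $T_{\v t}\v 1 = \v t$), together with the symmetry of the diagonal matrix $T_{\v t}^{-1}$, to obtain
\[
f_\delta(\v t) = -\v t^\top T_{\v t}^{-1}\Pi_{\v t}^{-1}T_{\v t}^{-1}\v t = -(T_{\v t}^{-1}\v t)^\top \Pi_{\v t}^{-1}(T_{\v t}^{-1}\v t) = -\v 1^\top\Pi_{\v t}^{-1}\v 1,
\]
which is the claim. There is no substantial obstacle here: the argument is a routine diagonal-scaling manipulation, and the only point requiring care is the well-definedness of $T_{\v t}^{-1}$ and $\Pi_{\v t}^{-1}$, which is exactly why the statement is restricted to interior points $\v t \in (0,1)^p$ rather than to the full hypercube $[0,1]^p$.
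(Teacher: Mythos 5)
Your proof is correct and follows essentially the same route as the paper's: both exploit invertibility of $T_{\v t}$ at interior points to rewrite $\wt\Sigma_{\v t} = T_{\v t}\,\Pi_{\v t}\,T_{\v t}$ (equivalently, the paper computes $T_{\v t}^{-1}\wt\Sigma_{\v t}T_{\v t}^{-1} = \Sigma + \delta(T_{\v t}^{-2}-I)$) and then absorb the factors $T_{\v t}$ via $\v t^\top = \v 1^\top T_{\v t}$. Your extra remark that $\Pi_{\v t} = \Sigma + \delta D_{\v t}$ is positive definite (since $D_{\v t}\succ 0$ on the interior and $\Sigma\succ 0$) is a small, welcome addition that the paper leaves implicit, but it does not change the argument.
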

\begin{proof}
    Note that $\v t^\top = \v 1^\top T_t$. Then,
\begin{align*}
f_\delta(\v t) & = -\v 1^\top T_{\v t} \widetilde{\Sigma}_{\v t}^{-1} T_{\v t} \v 1 \\
& = -\v 1^\top \left(T^{-1}_{\v t} \widetilde{\Sigma}_{\v t} T_{\v t}^{-1}\right)^{-1} \v 1 
\\
& = -\v 1^\top \left(\Sigma + \delta(T^{-2}_{\v t} - I)\right)^{-1} \v 1\\
& = -\v 1^\top \Pi^{-1}_{\v t} \v 1,
\end{align*}
since $T_{\v t}$ is invertible for every interior point $\v t$.
\end{proof}

\begin{proof}[Proof of Theorem~\ref{thm:relaxation-properties}]
To establish the continuity in (i), let $\v t_{1}, \v t_{2}, \cdots \in [0, 1]^p$ be a sequence of points converging to $\v t$ 
(element-wise or in the Euclidean norm). If $\v t$ contains zeros, without loss of generality, assume all these zeros appear at the front. Then, we can write
\[
T_{\v t_\ell} = 
\begin{bmatrix}
    F'_{\v t_\ell} & \v 0 \\
    \v 0 & F''_{\v t_\ell}
\end{bmatrix},
\]
where $F'_{\v t_\ell}$ is a diagonal matrix of the front elements of $\v t_\ell$ (that converge to zero) and $F''_{\v t_\ell}$ is a diagonal matrix of the remaining elements. Therefore, for all sufficiently large $\ell$, we can assume that $F''_{\v t_\ell}$ is invertible. 
Recall from \eqref{eqn:defn-Sigma} that $ \wt \Sigma_{\v t} = T_{\v t} \Sigma T_{\v t} + \delta (I - T_{\v t}^2)$. By dividing the $\Sigma$ matrix into four suitable sub-matrices, we can write
\begin{align*}
    \wt \Sigma_{\v t_\ell} &= 
    \begin{bmatrix}
    F'_{\v t_\ell} & \v 0 \\
    \v 0 & F''_{\v t_\ell}
    \end{bmatrix}
    \begin{bmatrix}
    \Sigma' & \wb \Sigma\\
    \wb \Sigma^\top & \Sigma''
    \end{bmatrix}
    \begin{bmatrix}
    F'_{\v t_\ell} & \v 0 \\
    \v 0 & F''_{\v t_\ell}
    \end{bmatrix}
    + 
    \delta  
    \begin{bmatrix}
    I - (F'_{\v t_\ell})^2 & \v 0 \\
    \v 0 & I - (F''_{\v t_\ell})^2
    \end{bmatrix}\\
    &=
    \begin{bmatrix}
    F'_{\v t_\ell} \Sigma' F'_{\v t_\ell} & F'_{\v t_\ell} \wb \Sigma F''_{\v t_\ell}\\
    F''_{\v t_\ell} \wb \Sigma^\top F'_{\v t_\ell} & F''_{\v t_\ell} \Sigma'' F''_{\v t_\ell}
    \end{bmatrix}
    + 
    \delta  
    \begin{bmatrix}
    I - (F'_{\v t_\ell})^2 & \v 0 \\
    \v 0 & I - (F''_{\v t_\ell})^2
    \end{bmatrix}\\
    &=
    \underbrace{\begin{bmatrix}
    \delta\lt(I - (F'_{\v t})^2\rt) & \v 0\\
    \v 0 & F''_{\v t} \Sigma'' F''_{\v t} + \delta\lt(I - (F''_{\v t})^2\rt)
    \end{bmatrix}}_{\wt \Sigma_{\v t} }\\
    &\hspace{1.5cm}+ 
    \underbrace{\begin{bmatrix}
    F'_{\v t_\ell} \Sigma' F'_{\v t_\ell} & F'_{\v t_\ell} \wb \Sigma F''_{\v t_\ell}\\
    F''_{\v t_\ell} \wb \Sigma^\top F'_{\v t_\ell} & F''_{\v t_\ell} \Sigma'' F''_{\v t_\ell} - F''_{\v t} \Sigma'' F''_{\v t} 
    \end{bmatrix}
    + 
    \delta  
    \begin{bmatrix}
    (F'_{\v t})^2 - (F'_{\v t_\ell})^2 & \v 0 \\
    \v 0 & (F''_{\v t})^2 - (F''_{\v t_\ell})^2
    \end{bmatrix}}_{E_\ell}\\
    &= 
    \wt \Sigma_{\v t} + E_{\ell},
\end{align*}
where $E_\ell = \wt \Sigma_{\v t_\ell} - \wt \Sigma_{\v t}$ clearly converges to an all-zero matrix. 
That means, the spectral radius of $E_\ell$ is less than $1$ for all sufficiently large values of $\ell$. Thus using the Neumann series \citep{horn2012matrix}, we can show that 
\[
\wt \Sigma_{\v t_\ell}^{-1} = (\wt \Sigma_{\v t} + E_\ell)^{-1} \rightarrow \wt \Sigma_{\v t}^{-1}, \quad \text{as}\, \, \ell \to \infty.
\]
As a consequence, $\lim_{\ell \to \infty} f_\delta(\v t_\ell) = f_\delta(\v t)$ for any $\delta > 0$.

To establish (ii),  observe that for any binary vector $\v s \in \{0,1\}^p$,
\[
\big(\wt \Sigma_{\v s}\big)_{[\v s]} = \big(T_{\v s} \Sigma T_{\v s}\big)_{[\v s]} = \Sigma_{[\v s]}, \quad \text{and}\quad 
\big(\wt \Sigma_{\v s}\big)_{[\v 1 - \v s]} = \delta I.
\]
Thus, $f_\delta(\v s) = - \v 1^\top  \Sigma_{[\v s]}^{-1} \v 1$.

To establish (iii), observe from Lemma~\ref{lem:simplified-f} that 
\begin{align*}
    \frac{d}{d\delta} f_\delta(\v t) & = -\v 1^\top \frac{d}{d\delta}\Pi^{-1}_{\v t} 1^\top \\  
    & = \v 1 ^\top \Pi_{\v t}^{-1}\left(\frac{d}{d\delta}\Pi_{\v t} \right)\Pi_{\v t}^{-1}\v 1 \\
    & = \v u^T \left(\frac{d}{d\delta}\Pi_{\v t} \right) \v u \\
    & = \v u^T \left(T^{-2}_{\v t} - I\right) \v u,
\end{align*}
with $\v u = \Pi^{-1}_{\v t}\v 1 \neq \v 0$. 
Since $\v t$ is in the unit interval, the matrix in the quadratic form is positive definite, which concludes the claim.

Finally, (iv) is a simple consequence of Lemma~\ref{lem:grad-hess} below which show that all the elements gradient $\nabla f_\delta(\v t)$ are non-positive. That means, in each coordinate $t_j$,  the objective function $f_\delta(\v t)$ is non-increasing along $t_j$. Thus, for each $k$, the simplex $\cS_k$ contains an optimal solution of Boolean relaxation~\eqref{eqn:opt-br2}.
\end{proof}

We now establish the gradient and the Hessian of the objective function $\wh f_\delta(\v t)$ using the scaling in Section~\ref{app:scaling}. Without scaling, the gradient and the Hessian of $f_\delta(\v t)$ are obtained by replacing $\v v$ with $\v 1$ in this lemma.

\begin{lemma}
    \label{lem:grad-hess}
    The gradient and the Hessian of $\wh f_\delta(\v t)$ at every $\v t \in (0,1)^p$ are respectively given by 
    \[
    \nabla \wh f_\delta(\v t) = -2\delta\frac{\v v \odot (\wh \Pi_{\v t}^{-1} \v 1)^2}{\v t^3},
    \]
and
    \[
    \wh H_\delta(\v t) = 2 \delta \diag\lt( \frac{\v v \odot (\wh \Pi_{\v t}^{-1} \v 1)}{\v t^3}\rt) \lt[3\diag(\v t^2 \odot \v w^2) - 4\delta  \wh{\Pi}_{\v t}^{-1} \rt] \diag\lt( \frac{\v v \odot (\wh \Pi_{\v t}^{-1} \v 1)}{\v t^3}\rt),
    \]
    where 
    \(
    \wh \Pi_{\v t} = \Sigma + \delta D_{\v t} \diag(\v v),
    \)
    and the division and power operations on vectors are element-wise.
\end{lemma}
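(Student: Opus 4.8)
The plan is to work at an interior point $\v t\in(0,1)^p$, where---exactly as in Lemma~\ref{lem:simplified-f} and its scaled counterpart in Section~\ref{app:scaling}---the objective collapses to the scalar form $\wh f_\delta(\v t) = -\v 1^\top\wh\Pi_{\v t}^{-1}\v 1$ with $\wh\Pi_{\v t} = \Sigma + \delta D_{\v t}\diag(\v v)$ and $D_{\v t} = T_{\v t}^{-2} - I$. The single structural fact driving both formulas is that $\v t$ enters $\wh\Pi_{\v t}$ only through the diagonal matrix $\delta D_{\v t}\diag(\v v) = \delta\,\diag\!\big(v_i(t_i^{-2}-1)\big)$; hence $\partial\wh\Pi_{\v t}/\partial t_j = -2\delta v_j t_j^{-3}\,\v e_j\v e_j^\top$ is supported on a single diagonal entry, and $\wh\Pi_{\v t}$ (and therefore $\wh\Pi_{\v t}^{-1}$) is symmetric. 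Throughout I write $\v u \coloneqq \wh\Pi_{\v t}^{-1}\v 1$ and repeatedly use the identity $\partial A^{-1}/\partial t_j = -A^{-1}(\partial A/\partial t_j)A^{-1}$.

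For the gradient, differentiating gives $\partial\wh f_\delta/\partial t_j = \v 1^\top\wh\Pi_{\v t}^{-1}(\partial\wh\Pi_{\v t}/\partial t_j)\wh\Pi_{\v t}^{-1}\v 1 = \v u^\top(\partial\wh\Pi_{\v t}/\partial t_j)\v u$. Substituting the rank-one derivative above immediately yields $\partial\wh f_\delta/\partial t_j = -2\delta v_j u_j^2/t_j^3$, i.e. $\nabla\wh f_\delta(\v t) = -2\delta\,(\v v\odot\v u^2)/\v t^3$ with all vector operations elementwise, which is the stated gradient.

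For the Hessian I differentiate the scalar $g_j \coloneqq -2\delta v_j u_j^2 t_j^{-3}$ once more in $t_k$. There are two sources of $t_k$-dependence: the explicit factor $t_j^{-3}$ (active only when $k=j$) and the implicit dependence through $u_j$. The latter I obtain from $\partial\v u/\partial t_k = -\wh\Pi_{\v t}^{-1}(\partial\wh\Pi_{\v t}/\partial t_k)\v u = 2\delta v_k u_k t_k^{-3}\,\wh\Pi_{\v t}^{-1}\v e_k$, so that $\partial u_j/\partial t_k = 2\delta v_k u_k t_k^{-3}(\wh\Pi_{\v t}^{-1})_{jk}$. Collecting terms, the off-diagonal entries are $-8\delta^2 (v_j u_j t_j^{-3})(v_k u_k t_k^{-3})(\wh\Pi_{\v t}^{-1})_{jk}$, while the diagonal entries combine the implicit term $-8\delta^2 v_j^2 u_j^2 t_j^{-6}(\wh\Pi_{\v t}^{-1})_{jj}$ with the explicit term $+6\delta v_j u_j^2 t_j^{-4}$. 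Writing $\v a \coloneqq (\v v\odot\v u)/\v t^3$, both patterns are captured by the single expression $\wh H_\delta(\v t) = 2\delta\,\diag(\v a)\big[3\diag(\v t^2\odot\v w^2) - 4\delta\wh\Pi_{\v t}^{-1}\big]\diag(\v a)$, where symmetry of $\wh\Pi_{\v t}^{-1}$ handles the off-diagonal matching and the elementary identity $v_j w_j^2 = 1$ (since $\v v = \v 1/(\v w\odot\v w)$) converts $6\delta v_j u_j^2 t_j^{-4}$ into the $3\diag(\v t^2\odot\v w^2)$ contribution.

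The routine part is the two applications of the inverse-derivative identity; the main obstacle is the bookkeeping in the Hessian---keeping the explicit $t_j^{-3}$ and the implicit $\wh\Pi_{\v t}^{-1}$-dependences separate, then repackaging the diagonal and off-diagonal pieces into the compact $\diag(\v a)[\,\cdot\,]\diag(\v a)$ sandwich and verifying that the diagonal term reduces correctly via $v_j w_j^2 = 1$.
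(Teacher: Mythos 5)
Your proof is correct and follows essentially the same route as the paper's: reduce to $\wh f_\delta(\v t) = -\v 1^\top\wh\Pi_{\v t}^{-1}\v 1$ at interior points, exploit the rank-one derivative $\partial\wh\Pi_{\v t}/\partial t_j = -2\delta v_j t_j^{-3}\,\v e_j\v e_j^\top$ with the inverse-derivative identity for the gradient, and differentiate once more to get the Hessian. Your entrywise bookkeeping (separating explicit and implicit $t_k$-dependence, then repackaging via $v_j w_j^2 = 1$ into the $\diag(\v a)[\,\cdot\,]\diag(\v a)$ sandwich) is just a scalar-level rendering of the paper's vectorized column-by-column computation, and all terms match.
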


\begin{proof}
Similar to Lemma~\ref{lem:simplified-f}, we can show that for any interior point $\v t \in (0, 1)^p$, 
\[
\wh f_\delta(\v t) = - \v 1^\top \wh \Pi_{\v t}^{-1} \v 1.
\]
Thus, for each $j =1, \dots, p$,
\begin{align*}
    \frac{\partial \wh f_\delta(\v t)}{\partial t_j} =  -\v 1^\top \frac{\partial \wh \Pi_{\v t}^{-1}}{\partial t_j} \v 1
    = \v 1^\top \wh \Pi_{\v t}^{-1} \frac{\partial \wh \Pi_{\v t}}{\partial t_j} \wh \Pi_{\v t}^{-1} \v 1.
\end{align*}
Since $\Sigma$ does not depend on $\v t$, from the definition of $\Pi_{\v t}$, 
\begin{align}
\frac{\partial \wh \Pi_{\v t}}{\partial t_j} = \delta \frac{\partial D_{\v t}}{\partial t_j} \diag(\v v) = - \frac{2 \delta v_j}{t_j^3} \v e_j \v e_j^\top,
\label{eqn:diff-Mt}
\end{align}
where $\v e_j$ is the vector with $1$ at the $j$-th location and $0$ everywhere else. 
Thus, we get the required expression for the gradient 
$\nabla \wh f_\delta(\v t)$. 

We now derive an expression of the Hessian of $\wh f(\v t)$. Toward this, to simplify the notation, let $\v u_{\v t} = \wh \Pi_{\v t}^{-1} \v 1$. Then, $j$-th column of the Hessian is given by
\begin{align*}
\frac{\partial }{\partial t_j} \nabla \wh f_\delta(\v t) &= \frac{6\delta v_j}{t_j^4} \v e_j \v e_j^\top (\v u_{\v t} \odot \v u_{\v t}) - 2\delta \lt(\frac{\v 1}{\v t^3} \rt) \odot \frac{\partial }{\partial t_j} (\v u_{\v t} \odot \v u_{\v t})\\
&= \frac{6\delta v_j}{t_j^4} \v e_j \v e_j^\top (\v u_{\v t} \odot \v u_{\v t}) - 4\delta \lt(\frac{\v u_{\v t}}{\v t^3}\rt) \odot \frac{\partial \v u_{\v t}}{\partial t_j} \\
&= \frac{6\delta v_j}{t_j^4} \v e_j \v e_j^\top (\v u_{\v t} \odot \v u_{\v t}) + 4\delta \lt(\frac{\v u_{\v t}}{\v t^3} \rt)\odot \left(\wh \Pi_{\v t}^{-1} \frac{\partial \wh \Pi_{\v t} }{\partial t_j} \v u_{\v t} \right).
\end{align*}
Using \eqref{eqn:diff-Mt}, we can write
\begin{align*}
\frac{\partial }{\partial t_j} \nabla \wh f_\delta(\v t)  
&= \frac{6\delta v_j}{t_j^4} \v e_j \v e_j^\top (\v u_{\v t} \odot \v u_{\v t}) -\frac{8\delta^2 v_j}{t_j^3} \lt(\frac{\v u_{\v t}}{\v t^3}\rt) \odot \left(\wh \Pi_{\v t}^{-1} \v e_j \v e_j^\top \v u_{\v t} \right).
\end{align*}
Consequently, the Hessian of $f_\delta$ at $\v t$ is given by
\begin{align*}
    \wh H_\delta(\v t) &= 6 \delta \lt[\diag\lt( \frac{\v v \odot \v u_{\v t}}{\v t^3}\rt)\rt]^2 T_{\v t}^2 \diag(\v 1/\v v) - 8\delta^2 \diag\lt( \frac{\v v \odot \v u_{\v t}}{\v t^3}\rt) \wh \Pi_{\v t}^{-1} \diag\lt( \frac{ \v v \odot \v u_{\v t}}{\v t^3}\rt)\nonumber\\
    &= 2 \delta \diag\lt( \frac{\v v \odot \v u_{\v t}}{\v t^3}\rt) \lt[3\diag(\v t^2 \odot \v w^2) - 4\delta  \wh \Pi_{\v t}^{-1} \rt] \diag\lt( \frac{\v v \odot \v u_{\v t}}{\v t^3}\rt),
\end{align*}
which completes the proof by recalling that $\v w = \v 1/ \sqrt{\v v}$.
\end{proof}

\begin{proof}[Proof of Theorem~\ref{thm:delta-prperties}] We establish the proof in generality under scaling, and Theorem~\ref{thm:delta-prperties} (no scaling) immediately holds by replacing $\v v$ with $\v 1$.

The Hessian $\wh H_\delta(\v t)$ is clearly symmetric at every $\v t \in (0, 1)^p$ and $\delta > 0$, but not necessarily positive definite. 
To show (i), note from Lemma~\ref{lem:grad-hess} that $\wh H_\delta(\v t)$ is negative definite if $4\delta  \wh \Pi_{\v t}^{-1} - 3\diag({\v t}^2 \odot \v w^2)$ is positive definite. To simplify the notation,  for a symmetric matrix $A$, we write $A \geq 0$ to denote that $A$ is positive semi-definite.

Suppose that $\Sigma = U \Delta U^\top$ is the singular value decomposition of $\Sigma$. Then, $\Sigma \leq \eta_1 I$ and thus, 
\begin{align*}
\wh \Pi_{\v t} = \Sigma + \delta D_{\v t} \diag(\v v)
\leq \eta_1 I + \delta D_{\v t} \diag(\v v)
= \diag\lt(\eta_1 \v 1 + \delta (\v 1/\v t^2 - \v 1) \odot \v v \rt).
\end{align*}
Hence, 
\[
\wh \Pi_{\v t}^{-1} \geq 
\diag \lt(\frac{\v t^2}{ \eta_1 \v t^2 + \delta (\v 1 - \v t^2) \odot \v v} \rt),
\]
and thus,
\begin{align*}
4\delta  \wh{\Pi}_{\v t}^{-1} - 3\diag(\v t^2 \odot \v w^2) \geq  \diag \lt(\frac{4\delta \v t^2}{ \eta_1 \v t^2 + \delta (\v 1 - \v t^2) \odot \v v} - 3 \frac{\v t^2}{\v v}\rt),
\end{align*}
here we used the fact that $\v v = 1/\v w^2$. Now it is enough to show that 
\begin{align*}
    \frac{4\delta}{\eta_1 t_j^2 + \delta (1 - t_j^2) v_j } - \frac{3}{v_j}  \geq 0 
\end{align*}
for all $j$ and $t_j \in (0, 1)$. Or equivalently, 
\begin{align*}
4\delta \geq \frac{3 }{v_j} \lt(\eta_1 t_j^2 + \delta (1 - t_j^2) v_j \rt).
\end{align*}
That is, 
\begin{align*}
(1 + 3t_j^2))\delta \geq 3 (\eta_1/v_j) t_j^2,
\end{align*}
or, 
\begin{align*}
\delta - 3 (\eta_1/v_j - \delta) t_j^2 \geq 0.
\end{align*}

Now, to establish the convexity property in (ii), observe that 
\begin{align*}
\wh \Pi_{\v t} = \Sigma + \delta D_{\v t} \diag(\v v)
\geq \eta_p I + \delta D_{\v t} \diag(\v v)
= \diag\lt(\eta_p \v 1 + \delta (\v 1/\v t^2 - \v 1) \odot \v v \rt).
\end{align*}
Hence, 
\[
\wh \Pi_{\v t}^{-1} \leq 
\diag \lt(\frac{\v t^2}{ \eta_p \v t^2 + \delta (\v 1 - \v t^2) \odot \v v} \rt),
\]
and thus,
\begin{align*}
4\delta  \wh{\Pi}_{\v t}^{-1} - 3\diag(\v t^2 \odot \v w^2) \leq  \diag \lt(\frac{4\delta \v t^2}{ \eta_p \v t^2 + \delta (\v 1 - \v t^2) \odot \v v} - 3 \frac{\v t^2}{\v v}\rt),
\end{align*}
Now it is enough to show that 
\begin{align*}
    \frac{4\delta}{\eta_p t_j^2 + \delta (1 - t_j^2) v_j } - \frac{3}{v_j}  \leq 0 
\end{align*}
for all $j$. Equivalently, 
\(
4\delta v_j \leq 3 \eta_p t_j^2 + 3\delta (1 - t_j^2) v_j.
\)
That is,
\(
\delta v_j \leq 3 \eta_p t_j^2 - 3\delta v_j t_j^2,
\)
or,
\begin{align*}
\delta  \leq \frac{1}{v_j}\lt(\frac{3 \eta_p t_j^2}{1 + 3 v_j t_j^2}\rt),
\end{align*}
which is true when for all $\v t \in (\varepsilon, 1)^p$ and $\delta > 0$ such that 
\[
\delta  \leq \frac{1}{\max_j v_j}\lt(\frac{3 \eta_p \varepsilon^2}{1 + 3  \varepsilon^2}\rt).
\]
\end{proof}

\begin{proof}[Proof of Theorem~\ref{thm:sol-continuity}]
Recall that for a fixed integer $k$,
    \[
    h_k(\delta) = \minimize_{\v t \in \cC_k} f_\delta(\v t)\quad 
    \text{and}
    \quad 
    \cD_{k,\delta} = \argmin_{\v t \in \cC_k} f_\delta(\v t).
    \]
Since $\Delta \coloneqq [\varepsilon, \eta_1]$ is compact for any $\varepsilon \in (0, \eta_1)$, from Chapter~9 of \cite{sundaram1996first}, $h_k(\delta)$ is continuous and $\cD_{k,\delta}$ is compact-valued, upper hemicontinuous\footnote{Upper hemicontinuity is sometimes referred to as upper semicontinuity.} correspondence on $\Delta$. This implies that for any sequence $\delta_\ell \to \eta_1$ and $\v t^{(\ell)} \in \cD_{k,\delta}$ with $\v t^{(\ell)} \to \v t^*$, we have $\v t^* \in \cD_{k,\eta_1}$.
We known from Theorem~\ref{thm:delta-prperties}(i) that  $f_{\eta_1}(\v t)$ is strictly concave on $\cC_k$, and thus, $\cD_{k,\eta_1}$ must be the solution set $\cD^*_k$ of the target optimization problem \eqref{eqn:opt-bc3}.
\end{proof}

\bibliographystyle{apalike}
\bibliography{Refs}

\end{document}